\documentclass[]{fairmeta}
\usepackage{xspace}

\makeatletter
\DeclareRobustCommand\onedot{\futurelet\@let@token\@onedot}
\def\@onedot{\ifx\@let@token.\else.\null\fi\xspace}

\makeatother

\usepackage{wrapfig}
\usepackage{colortbl}
\usepackage{color, xcolor} 
\usepackage{listings}
\usepackage{enumitem}

\newcommand{\method}{\textsc{NAVA-WAM}}
\usepackage{tabularx}
\usepackage{array}
\usepackage{wrapfig}
\usepackage[table]{xcolor}
\usepackage{amsthm}
\newtheorem{proposition}{Proposition}[section]
\usepackage{algorithm}
\usepackage{algorithmic}

\definecolor{row}{RGB}{235, 245, 251}

\definecolor{cvprblue}{rgb}{0.21,0.49,0.74}
\usepackage{makecell}
\usepackage{wrapfig}
\usepackage{amssymb}
\usepackage{graphicx}   
\usepackage{subcaption} 
\usepackage{booktabs}   
\usepackage{multirow}   
\newcolumntype{C}[1]{>{\centering\arraybackslash}m{#1}}
\newcommand{\projectpage}{\href{https://zhaochongan.github.io/projects/NAVA-WAM}{\textcolor{magenta}{Project Page}}\xspace}

\title{Native Action-Prior Learning from Videos for World Action Models}

\author[1,2,\dagger]{Zhaochong An}
\author[1]{Fei Zhang}
\author[1]{Menglin Jia}
\author[1]{Duncan Frost}
\author[1]{Zijian Zhou}
\author[1]{Yikai Wang}
\author[4]{Xudong Wang}
\author[1]{Aditya Patel}
\author[1]{Belinda Zeng}
\author[1]{Tao Xiang}
\author[2]{Serge Belongie}
\author[3]{Amir Bar}
\author[1,*]{Sen He}

\affiliation[1]{Meta AI}
\affiliation[2]{University of Copenhagen}
\affiliation[3]{Imperial College London}
\affiliation[4]{Physical Intelligence}

\contribution[\dagger]{Work done at Meta}
\contribution[*]{Project lead and corresponding author}

\abstract{
World action models integrate future visual dynamics with robot action prediction, but their scalability remains limited by the need for action-annotated robot trajectories. 
Observation-only videos contain rich evidence about interaction dynamics, but existing approaches typically use them either to pretrain visual representations that must later be adapted for control, or to infer latent actions that are subsequently grounded to robot commands.
We present \method{}, which introduces \emph{native action-prior learning} by directly pretraining the action policy from observation-only videos, avoiding indirect representation-to-control transfer or a separate latent-action model.
Our training consists of two stages.
First, we pretrain on observation-only videos, where future-video flow-matching supervision over visual transitions is propagated through transition-structured joint attention to optimize the Action-DiT and learn action-relevant priors.
Second, we use action-labeled demonstrations to post-train the Action-DiT for robot control through joint video--action flow matching, while asymmetric attention decouples the visual branch from iterative action denoising and enables efficient action-only inference.
Extensive experiments show that \method{} consistently outperforms prior approaches under both in-distribution and out-of-distribution settings, while demonstrating strong action-label efficiency and effective real-robot generalization.
These results establish native action-prior learning as an effective approach to directly pretrain action policies from observation-only videos, providing a scalable path beyond action-labeled robot data.
}

\date{\today}
\metadata[Project page]{\url{https://zhaochongan.github.io/projects/NAVA-WAM/}}

\begin{document}

\maketitle

\section{Introduction}
\label{sec:intro}

World action models (WAMs)~\citep{zhu2025uwm,li2025uva,ye2026dreamzero,li2026lingbotva,bi2025motus} incorporate future visual dynamics into action learning, with strong potential to improve policy generalization.
However, scaling WAMs remains challenging because they rely on action-annotated robot trajectories that are costly to collect and hard to standardize across embodiments.
In contrast, large-scale observation-only videos provide abundant information about physical interactions and state transitions without action labels.
The key challenge is therefore how to leverage such videos to learn priors that effectively improve action policy learning.

Existing approaches largely follow two paradigms for exploiting observation-only videos.
\textit{Vision representation learning} first learns visual or predictive representations from videos and then trains an action policy on top of them using action-labeled robot demonstrations~\citep{nair2023r3m,hu2025vpp,assran2025vjepa2,sun2026vla}.
Because the learned video prior remains encoded in a vision-centric representation, its connection to executable actions is established only during downstream robot training, introducing an indirect representation-to-control transfer. 
\textit{Latent-action learning} instead infers action-like variables from visual transitions and uses them as intermediate supervision for subsequent policy learning~\citep{schmidt2024lapo,ye2025lapa,chen2025moto,bu2025univla,bi2025motus,chen2026lawam}.
While more action-oriented, this paradigm requires an additional latent-action modeling stage and relies on complex architectural or optimization constraints~\citep{yang2026learning,wang2026repwam,zhang2026lingbotva2} to avoid shortcut solutions in which the latent action simply copies information from the target observation. 
These constraints increase modeling complexity and can restrict the expressiveness of the latent-action space, thereby limiting the supervision provided to the policy and impairing its control learning.

These limitations motivate a simple question: 
\emph{Can observation-only videos directly optimize an action policy without relying on an intermediate interface?} 
We answer this question with \method{}, which introduces \emph{native action-prior learning} for world action models.
As illustrated in Fig.~\ref{fig:pretrain_data}, instead of first learning a separate visual representation or latent-action space and then using it in policy training, \method{} uses future-video prediction to directly pretrain the action policy. 
This creates a direct learning path from scalable observation-only videos to the action policy, avoiding intermediate representation constraints and enabling more effective policy learning.

\begin{figure}[t]
    \centering
    \includegraphics[width=\linewidth]{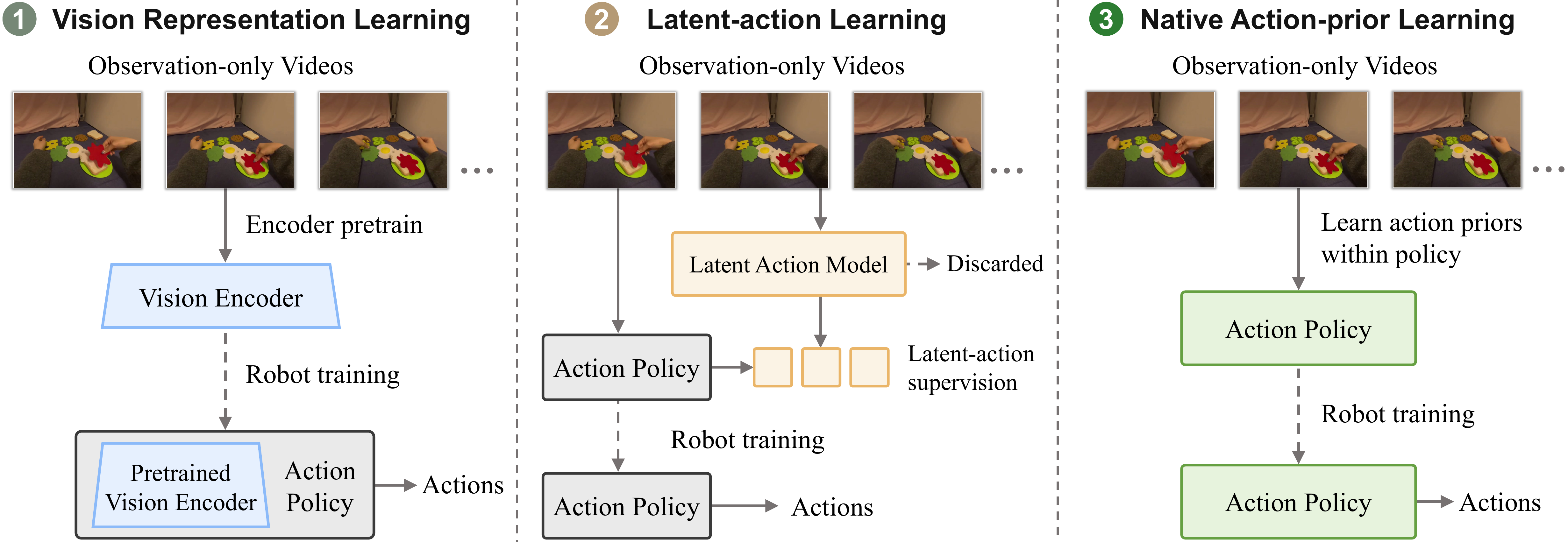}
    \caption{
    \textbf{Paradigms of leveraging observation-only videos for robot policy learning.}
    \textit{(1) Vision representation learning} first learns visual representations from videos within a vision encoder and then trains an action policy on top, introducing an indirect representation-to-control transfer.
    \textit{(2) Latent-action learning} extracts latent actions from visual transitions as intermediate supervision for policy learning, introducing an additional modeling stage and a potential representational bottleneck.
    \textit{(3) Native action-prior learning} directly uses video dynamics to optimize the action policy, avoiding additional intermediate interfaces and representational bottlenecks.
    }
    \label{fig:pretrain_data}
\end{figure}

Specifically, \method{} adopts a joint Mixture-of-Transformers (MoT) architecture~\citep{liang2024mixture}, comprising a Video-DiT and an Action-DiT that interact through joint attention while retaining modality-specific parameters.
During action-free pre-training, the Video-DiT is frozen and the Action-DiT is optimized solely through future-video flow-matching supervision.
Our transition-structured joint attention allows the Action-DiT to model visual transitions while participating in future-state prediction, enabling the video objective to directly optimize the action policy.
In this way, the Action-DiT learns action-relevant priors directly from visual transitions without requiring an intermediate interface.
During downstream post-training, we jointly optimize the video and action streams on action-labeled robot demonstrations through video--action flow matching, adapting the pretrained Action-DiT to embodiment-specific control.
We make cross-stream attention asymmetric to decouple the visual stream from the evolving action sample, allowing its representations to be cached and enabling efficient action-only inference without explicitly generating future videos.

We evaluate \method{} on the simulation benchmarks LIBERO/LIBERO-Plus~\citep{liu2023libero,fei25libero-plus} and RoboTwin~2.0~\citep{chen2025robotwin} under both in-distribution and out-of-distribution settings, demonstrating superior performance across both regimes.
We further deploy \method{} on a physical Franka FR3, achieving $93.3\%$ success across tabletop manipulation tasks, compared with $66.7\%$ for DreamZero~\citep{ye2026dreamzero} and $53.3\%$ for $\pi_{0.5}$~\citep{physicalintelligence2025pi05}.
Controlled experiments show that native action-prior learning consistently outperforms representation-based and latent-action-based approaches across action-label budgets, with larger gains in low-budget regimes.
Together, these results demonstrate that native action-prior learning can effectively leverage observation-only videos to learn action priors, improving generalization and action-label efficiency.

\section{Related Work}
\subsection{World Action Models}
\label{sec:related_wam}
Direct visuomotor policies learn observation-to-action mappings across tasks and embodiments~\citep{brohan2023rt1,octomodelteam2024octo,liu2025rdt1b}, while vision--language--action (VLA) models further leverage semantic priors from large-scale vision--language pre-training~\citep{zitkovich2023rt2,kim2024openvla,black2024pi0,physicalintelligence2025pi05,bjorck2025gr00t,geminiroboticsteam2025gemini,xiaomiRobotics2026xr1,zhang2026hy,shin2026dual,zhang2026foreact,yang2026chain}.
Their primary objective, however, remains conditional action prediction, leaving environment dynamics to be learned implicitly.
With recent advances in video modeling~\citep{an2026onestory,qiao2026track2view,an2026vggrpo}, world action models (WAMs) instead explicitly couple action learning with predictive modeling of future visual evolution.
Existing WAMs can be broadly categorized into cascaded and joint architectures. 
\textit{Cascaded WAMs} first extract predictive information from a video or world model and then use it to drive a separate action module~\citep{du2023unipi,black2024susie,ko2024avdc}. 
Some methods explicitly generate future observations, visual subgoals, or motion cues before action prediction~\citep{bharadhwaj2024gen2act,zhou2024robodreamer,chen2026flowwam,li2026structured,lou2026mask,wang2026mvista}, while others condition policies on predictive intermediate representations without explicit pixel generation~\citep{hu2025vpp,zhang2026imagewam,yan2026s,su2026world,luo2026being}.
\textit{Joint WAMs} model future observations and actions within a shared architecture, enabling direct interaction between visual and action representations~\citep{zhu2025uwm,li2025uva,kim2026cosmospolicy,ye2026gigaworld,huang2026learning,bi2026motus2}. 
DreamZero and LingBot-VA~\citep{ye2026dreamzero,li2026lingbotva} jointly perform future visual prediction and action generation during closed-loop control, whereas Fast-WAM~\citep{yuan2026fastwam} retains future-video prediction as training-time supervision while performing action-only inference. 
Dyna-2~\citep{dynarobotics2026dyna2} further shows that adding human videos through co-training improves cross-embodiment generalization.
\method{} follows the joint WAM paradigm with action-only inference, while enabling the action-generating expert to be directly pretrained from observation-only videos.

\subsection{Learning from Observation-Only Videos} \label{sec:related_control_representations} 
Observation-only videos provide scalable supervision for physical interactions and state transitions without action annotations. 
Existing robot learning approaches primarily leverage them through two paradigms. 
\textit{Vision representation learning} learns visual or predictive representations from videos and uses them during downstream policy learning~\citep{sun2026vla,lin2026jepa,feng2026spatial,nair2023r3m,goswami2025world,zhang2025unijepa,zhang2026diffuview,liang2026bootstrap,ren2026videoworld}. 
VPP~\citep{hu2025vpp} conditions policies on predictive video features, and Masquerade~\citep{lepert2025masquerade} learns visual representations through future robot-keypoint prediction.
V-JEPA~2 and JEPA-VLA~\citep{assran2025vjepa2,miao2026jepa} use predictive video representations for planning and VLA learning. 
Such approaches retain video-derived knowledge in visual representations whose connection to executable actions is established only during downstream policy training, resulting in an indirect representation-to-control transfer.
\textit{Latent-action learning} instead infers action-like variables from visual transitions as intermediate supervision for policy learning~\citep{bruce2024genie,chen2025moto,nikulin2025laom,bu2025univla,tharwat2025lawm,chen2026lawam,liu2026lara,lin2026pixels,wanmulti,lee2026mvp,luo2026joint}. 
LAPO~\citep{schmidt2024lapo} learns latent actions with a latent-to-action decoder, while LAPA~\citep{ye2025lapa} pretrains a VLA through discrete latent-action prediction before grounding them to robot actions. 
Motus~\citep{bi2025motus} incorporates optical-flow motion cues, while RepWAM and LingBot-VA~2.0~\citep{wang2026repwam,zhang2026lingbotva2} learn transition-oriented action representations in semantic visual spaces. 
However, these methods introduce an additional latent-action modeling stage and require architectural or optimization constraints to prevent shortcut solutions~\citep{gao2026dreamdojo,huangcross,yang2026learning,wang2026repwam,zhang2026lingbotva2,chen2026learning}, 
which can constrain the learned latent-action space and the supervision it provides to the policy.
In contrast, \method{} introduces \emph{native action-prior learning} to learn transition priors directly within the action policy, providing a direct path from scalable observation-only videos to action policy pre-training.

\section{Method}
\label{sec:method}

We present \method{}, a world action model that learns action-relevant priors directly within the robot policy from observation-only videos, without relying on an intermediate interface.
Our method is illustrated in Fig.~\ref{fig:main}.
Sec.~\ref{sec:method_preliminaries} introduces the problem formulation and model architecture.
Sec.~\ref{sec:method_pretraining} presents native action-prior pre-training from observation-only videos, and Sec.~\ref{sec:method_posttraining} describes downstream post-training and inference.

\begin{figure}[t]
    \centering
    \includegraphics[width=\linewidth]{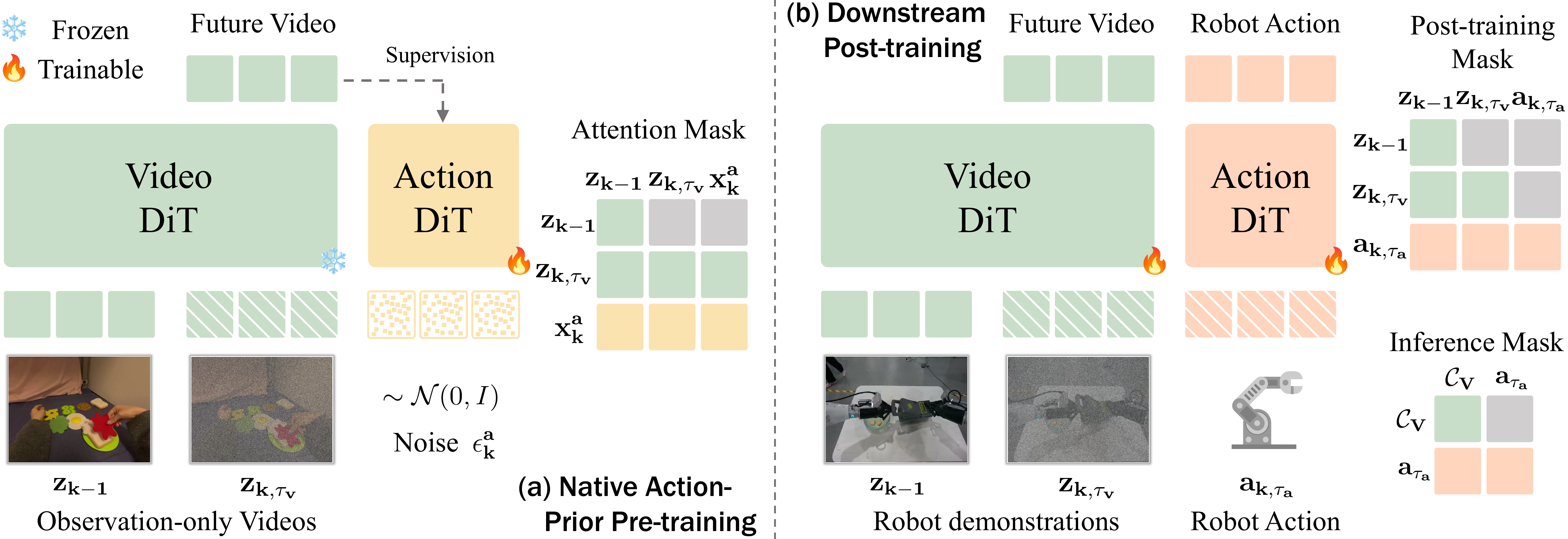}
\caption{\textbf{Overview of \method{}.}
During native action-prior pre-training, the Action-DiT is optimized solely through future-video supervision, learning action-relevant priors directly from observation-only videos.
During post-training, the Video-DiT and Action-DiT are jointly optimized on action-labeled robot demonstrations using video and action flow-matching objectives, adapting the pretrained Action-DiT to continuous robot control.
At inference, \method{} denoises only robot actions without explicitly generating future videos, enabling efficient closed-loop control.
}
    \label{fig:main}
\end{figure}

\subsection{Preliminaries}
\label{sec:method_preliminaries}
\paragraph{Problem formulation.}
Let $o_t$ denote the visual observation at time $t$, $l$ the language instruction, and $q_t$ the current proprioceptive state.
A robot policy $\mathcal{P}_\theta$ predicts a continuous action chunk $a_{t:t+H-1}\in\mathbb{R}^{H\times d_a}$ over a horizon of $H$ steps according to
\begin{equation}
    \mathcal{P}_\theta
    \left(
        a_{t:t+H-1}
        \mid
        o_t,l,q_t
    \right).
    \label{eq:policy_formulation}
\end{equation}
We consider two sources of training data.
The observation-only dataset $\mathcal{D}_v=\{(o_{0:T},l)\}$ contains videos and language descriptions without action annotations.
The action-labeled robot dataset $\mathcal{D}_r=\{(o_{0:T},l,q,a_{0:T-1})\}$ additionally provides proprioceptive states $q$ and continuous robot actions $a_{0:T-1}$.
A frozen spatiotemporal VAE $E$~\citep{wan2025} encodes each video clip into latent visual states $z_{0:K}=E(o_{0:T})$, where each adjacent pair $(z_{k-1},z_k)$ defines a visual state transition indexed by $k$.
Our goal is to leverage the abundant action-free transitions in $\mathcal{D}_v$ to pretrain the action policy before adapting it to embodiment-specific control using $\mathcal{D}_r$.

\paragraph{World action model architecture.}
We instantiate $\mathcal{P}_\theta$ as a Mixture-of-Transformers (MoT)~\citep{liang2024mixture} comprising a pretrained Video-DiT with parameters $\theta_V$ and an Action-DiT with parameters $\theta_A$.
The Video-DiT models visual dynamics, while the Action-DiT generates continuous robot actions.
The two experts retain modality-specific parameters while exchanging information through joint attention.
Each expert independently constructs its queries, keys, and values.
Queries from modality $m\in\{V,A\}$ attend jointly to the keys and values from both streams:
\begin{equation}
    H_m
    =
    \operatorname{Attn}\!\left(
        Q_m,
        [K_V;K_A],
        [V_V;V_A]
    \right).
    \label{eq:joint_attention}
\end{equation}
Together, the two experts form a unified architecture for modeling visual dynamics and robot actions.

\subsection{Native Action-Prior Pre-training}
\label{sec:method_pretraining}
Observation-only videos capture visual state transitions that reveal the consequences of physical interactions, providing a rich and scalable source for learning action-relevant priors.
Our goal is to improve WAM training by introducing a pre-training stage that learns action priors from such videos, while keeping the model architecture consistent with downstream WAM training and making only minimal changes.
To this end, we propose \emph{native action-prior pre-training}, which directly exploits transitions in $\mathcal{D}_v$ to pretrain the action policy without introducing an intermediate interface.

Concretely, we retain the same MoT architecture used for downstream WAM training.
During pre-training, action tokens are not supervised by robot actions; instead, we structure joint attention around visual transitions and optimize the Action-DiT solely through future-video supervision.
This enables the action policy itself to acquire action priors directly from observation-only videos.

\paragraph{Inverse-forward dynamics modeling with attention.}
We motivate action-prior learning through an inverse--forward dynamics factorization~\citep{schmidt2024lapo,ye2025lapa}.
Given two adjacent states $z_{k-1}$ and $z_k$, inverse dynamics extracts a transition-dependent variable $r_k$, while forward dynamics uses $r_k$ together with the preceding state to predict the future state:
\begin{equation}
    r_k
    =
    q(z_{k-1},z_k),
    \qquad
    \hat z_k
    =
    f(z_{k-1},r_k).
    \label{eq:inverse_forward_abstraction}
\end{equation}
Rather than introducing separate inverse and forward models~\citep{cui2024dynamo,wang2026repwam,zhang2026lingbotva2,yang2026learning}, we realize an analogous information flow implicitly within the policy through \emph{transition-structured joint attention}.

For each transition $z_{k-1}\rightarrow z_k$, we construct a segment containing the clean preceding state $z_{k-1}$, a noised version of the future state $z_k$, denoted $z_{k,\tau_v}$ at timestep $\tau_v$, and the corresponding group of Action-DiT tokens $x_k^a$, as illustrated in Fig.~\ref{fig:main}.
Transition segments are isolated by the attention mask (detailed in Appendix~\ref{app:attention_mask}).
Within each segment, clean preceding-state queries attend only to the clean state, whereas noised-future and action queries interact with all three token groups.

Under this transition-structured mask, the Action-DiT attention is given by
\begin{equation}
    H_{A,k}
    =
    \operatorname{Attn}\!\left(
        Q_{A,k},
        [K_{V,k-1};
         K_{V,k,\tau_v};
         K_{A,k}],
        [V_{V,k-1};
         V_{V,k,\tau_v};
         V_{A,k}]
    \right).
    \label{eq:transition_action_attention}
\end{equation}
The Action-DiT jointly accesses the preceding and future visual states and forms an internal representation of their transition.
This information flow is analogous to inverse dynamics in Eq.~\ref{eq:inverse_forward_abstraction}, while keeping the transition representation implicit rather than materializing explicit latent actions.

Conversely, the noised future-state tokens attend to both visual states and the corresponding Action-DiT representations:
\begin{equation}
    H_{V,k,\tau_v}
    =
    \operatorname{Attn}\!\left(
        Q_{V,k,\tau_v},
        [K_{V,k-1};
         K_{V,k,\tau_v};
         K_{A,k}],
        [V_{V,k-1};
         V_{V,k,\tau_v};
         V_{A,k}]
    \right).
    \label{eq:transition_video_attention}
\end{equation}
This interaction parallels forward dynamics in Eq.~\ref{eq:inverse_forward_abstraction}, where future-state prediction depends on both the preceding state and a transition representation.
Here, the Action-DiT accesses the future state only through its noised version $z_{k,\tau_v}$, precluding the trivial shortcut of directly copying the clean prediction target $z_k$~\citep{garrido2026learning,ye2025lapa}.
See Appendix~\ref{app:action_prior_analysis} for formal analysis.
Together, Eqs.~\ref{eq:transition_action_attention} and~\ref{eq:transition_video_attention} realize an implicit inverse--forward interaction within the joint transformer, with the Action-DiT modeling the transition between adjacent visual states.

\paragraph{Pre-training formulation.}
We use standard flow matching for future-video prediction~\citep{lipman2023flowmatching}.
For each target future state $z_k$, we sample $\epsilon_k^v\sim\mathcal{N}(0,I)$ and a flow timestep $\tau_v\in[0,1]$, and construct
\begin{align}
    z_{k,\tau_v}
    &=
    (1-\tau_v)z_k
    +
    \tau_v\epsilon_k^v,
    &
    u_k^v
    &=
    \epsilon_k^v-z_k,
    \label{eq:pretraining_flow}
\end{align}
where $\tau_v=0$ and $\tau_v=1$ correspond to the data and noise endpoints, respectively, and $u_k^v$ is the target velocity.
For the corresponding Action-DiT input, we independently sample $\epsilon_k^a\sim\mathcal{N}(0,I)$ with the action timestep at $\tau_a=1$.

For each transition, the Video-DiT receives $[z_{k-1},z_{k,\tau_v}]$, while the Action-DiT receives $\epsilon_k^a$.
We freeze the pretrained Video-DiT, $\theta_V=\bar{\theta}_V$, and optimize only the Action-DiT parameters $\theta_A$.
With stream-wise timesteps $\boldsymbol{\tau}=(\tau_v,1)$, the pre-training objective is
\begin{equation}
    \mathcal{L}_{\mathrm{pre}}
    =
    \mathbb{E}\!\left[
        \sum_{k=1}^{K}
        w(\tau)
        \left\|
        v^v_{\bar{\theta}_V,\theta_A}
        \!\left(
            [z_{k-1},z_{k,\tau_v}],
            \epsilon_k^a,
            \boldsymbol{\tau}
            \mid l
        \right)
        -
        u_k^v
        \right\|_2^2
    \right].
    \label{eq:pretraining_objective}
\end{equation}
Through transition-structured joint attention, future-video prediction depends on the Action-DiT representations.
Because the Video-DiT is frozen, minimizing the future-video objective directly updates $\theta_A$ through the joint-attention pathway.
This direct optimization of the Action-DiT from visual transitions constitutes our native action-prior learning, allowing action-relevant priors to emerge without a separate transfer interface.

\subsection{Post-training and Inference}
\label{sec:method_posttraining}

Action-labeled robot demonstrations provide direct supervision for embodiment-specific continuous control.
We post-train on $\mathcal{D}_r$ to ground the action priors learned from observation-only videos to executable robot actions.
We retain the transition-based structure from pre-training while adapting the cross-stream interaction to direct action supervision and efficient inference.

\paragraph{Asymmetric transition attention.}
During native action-prior pre-training, future-video tokens attend to Action-DiT representations through Eq.~\ref{eq:transition_video_attention}, allowing the video objective to propagate supervision into the Action-DiT.
During post-training, action labels become available and the Action-DiT receives direct supervision.
We therefore make the visual stream independent of the action stream while retaining visual conditioning for the Action-DiT.
This asymmetric interaction enables the visual representations to be cached and reused throughout action denoising at inference.

Specifically, as shown in Fig.~\ref{fig:main}, Video-DiT queries attend only to the visual stream:
\begin{equation}
    H_V
    =
    \operatorname{Attn}\!\left(
        Q_V,
        K_V,
        V_V
    \right),
    \label{eq:post_video_attention}
\end{equation}
while Action-DiT queries continue to attend to both visual and action streams:
\begin{equation}
    H_A
    =
    \operatorname{Attn}\!\left(
        Q_A,
        [K_V;K_A],
        [V_V;V_A]
    \right).
    \label{eq:post_action_attention}
\end{equation}

\paragraph{Post-training formulation.}
We jointly train the visual and action streams using standard flow matching.
For each transition $z_{k-1}\rightarrow z_k$ and its corresponding continuous action chunk $a_k$, we independently sample $\epsilon_k^v,\epsilon_k^a\sim\mathcal{N}(0,I)$ and flow timesteps $\tau_v,\tau_a\in[0,1]$, and construct
\begin{align}
    z_{k,\tau_v}
    &=
    (1-\tau_v)z_k
    +
    \tau_v\epsilon_k^v,
    &
    u_k^v
    &=
    \epsilon_k^v-z_k,
    \nonumber\\
    a_{k,\tau_a}
    &=
    (1-\tau_a)a_k
    +
    \tau_a\epsilon_k^a,
    &
    u_k^a
    &=
    \epsilon_k^a-a_k.
    \label{eq:posttraining_flow}
\end{align}
With $\boldsymbol{\tau}=(\tau_v,\tau_a)$, the post-training objective is
$
    \mathcal{L}_{\mathrm{post}}
    =
    \lambda_v\mathcal{L}_{\mathrm{FM}}^v
    +
    \lambda_a\mathcal{L}_{\mathrm{FM}}^a,
$
where
\begin{equation}
    \mathcal{L}_{\mathrm{FM}}^m
    =
    \mathbb{E}\!\left[
        \sum_{k=1}^{K}
        w_m(\tau_m)
        \left\|
        v_{\theta_V,\theta_A}^m
        \!\left(
            [z_{k-1},z_{k,\tau_v}],
            a_{k,\tau_a},
            \boldsymbol{\tau}
            \mid l,q
        \right)
        -
        u_k^m
        \right\|_2^2
    \right],
    \qquad
    m\in\{v,a\}.
    \label{eq:posttraining_objective}
\end{equation}
Here, $\lambda_v$ and $\lambda_a$ weight the visual and action objectives, respectively.
The video objective adapts the visual expert to downstream robot observations, while the action objective grounds the pretrained Action-DiT to embodiment-specific continuous control.

\paragraph{Inference.}
Asymmetric attention makes the Video-DiT independent of the evolving action sample.
We initialize the future-video and action streams with Gaussian noise and evaluate them at the first denoising step.
During this forward pass, we cache the Video-DiT keys and values at each layer $\ell$:
\begin{equation}
    \mathcal{C}_V
    =
    \{K_\ell^V,V_\ell^V\}_{\ell=1}^{L}.
    \label{eq:video_kv_cache}
\end{equation}
Subsequent denoising steps integrate only the action flow while reusing the visual cache $\mathcal{C}_V$:
\begin{equation}
    \frac{d a_{\tau_a}}{d\tau_a}
    =
    v_{\theta_A}^a
    \!\left(
        a_{\tau_a},
        \tau_a
        \mid
        \mathcal{C}_V,l,q
    \right),
    \qquad
    \tau_a:1\rightarrow0.
    \label{eq:cached_action_generation}
\end{equation}
Thus, after the initial joint forward pass, iterative denoising runs only through the Action-DiT, enabling efficient closed-loop control without explicitly generating future videos.

\section{Experiments}
\label{sec:experiments}

We evaluate \method{} on standard simulation benchmarks under both \emph{in-domain} (ID) and \emph{out-of-domain} (OOD) settings, and further validate its deployment on a physical robot.

\subsection{Experimental Setup}
\label{sec:experimental_setup}
\method{} initializes the Video-DiT from Wan2.2-5B~\citep{wan2025} and uses a 1B-parameter Action-DiT with hidden dimension $1{,}024$~\citep{li2026lingbotva,yuan2026fastwam}.
Native action-prior pre-training uses observation-only videos from Open X-Embodiment~\citep{o2024open}, AgiBotWorld~\citep{bu2025agibot}, and EgoDex~\citep{hoque2025egodex}. 
For downstream adaptation, we optimize the video and action streams with $\lambda_v=\lambda_a=1$. 
More details are provided in Appendix~\ref{app:implementation_details}.

\begin{table}[!t]
    \centering
    \footnotesize
    \setlength{\tabcolsep}{2pt}
    \renewcommand{\arraystretch}{1.08}
    \caption{\textbf{Results on LIBERO/LIBERO-Plus (\textit{left}) and RoboTwin~2.0 (\textit{right}).}
    We report average success rate (\%).
    LIBERO and RoboTwin Clean measure in-domain performance, while LIBERO-Plus and RoboTwin Random evaluate out-of-domain generalization.
    Best results are shown in \textbf{bold}.}
    \label{tab:main_results}
    \vspace{-6pt}
    \begin{minipage}[t]{0.485\linewidth}
        \vspace{0pt}
        \centering
        \begin{tabularx}{\linewidth}{
            l
            >{\centering\arraybackslash}X
            >{\centering\arraybackslash}X
        }
            \toprule
            Method & LIBERO & LIBERO-Plus \\
            \midrule
            \rowcolor{gray!10}
            \multicolumn{3}{@{}l}{\emph{Direct action policies}} \\
            $\pi_0$~\citeyearpar{black2024pi0} & 94.1 & 53.6 \\
            $\pi_0$-FAST~\citeyearpar{pertsch2025fast} & 85.5 & 61.6 \\
            StarVLA-$\alpha$~\citeyearpar{ye2026starvla} & 96.5 & 77.0 \\
            $\pi_{0.5}$~\citeyearpar{physicalintelligence2025pi05} & 96.9 & 77.4 \\
            ABot-M0~\citeyearpar{yang2026abot} & 98.6 & 80.5 \\

            \addlinespace[2pt]
            \rowcolor{gray!10}
            \multicolumn{3}{@{}l}{\emph{World action models}} \\
            \addlinespace[2pt]
            JEPA-VLA~\citeyearpar{miao2026jepa} & 96.4 & 25.6 \\
            Fast-WAM~\citeyearpar{yuan2026fastwam} & 97.6 & 51.5 \\
            Image-WAM~\citeyearpar{zhang2026imagewam} & 98.4 & 83.1 \\
            Being-H0.7~\citeyearpar{luo2026being} & \textbf{99.2} & 82.1 \\

            \addlinespace[2pt]
            \rowcolor{row}
            \textbf{\method{}} & 99.0 & \textbf{83.5} \\
            \bottomrule
        \end{tabularx}
    \end{minipage}
    \hfill
    \begin{minipage}[t]{0.485\linewidth}
        \vspace{0pt}
        \centering

        \begin{tabularx}{\linewidth}{
            l
            >{\centering\arraybackslash}X
            >{\centering\arraybackslash}X
        }
            \toprule
            Method & Clean & Random \\
            \midrule
            \rowcolor{gray!10}
            \multicolumn{3}{@{}l}{\emph{Direct action policies}} \\
            DP~\citeyearpar{chi2025diffusion} & 28.0 & 0.6 \\
            RDT~\citeyearpar{liu2025rdt1b} & 34.5 & 13.7 \\
            $\pi_0$~\citeyearpar{black2024pi0} & 46.4 & 16.3 \\
            UP-VLA~\citeyearpar{upvla25} & 52.9 & 15.2 \\

            \addlinespace[2pt]
            \rowcolor{gray!10}
            \multicolumn{3}{@{}l}{\emph{World action models}} \\
            \addlinespace[2pt]
            Fast-WAM~\citeyearpar{yuan2026fastwam} & 71.9 & 6.3 \\
            BagelVLA~\citeyearpar{hu2026bagelvla} & 75.3 & 20.5 \\
            HALO~\citeyearpar{shou2026halo} & 80.5 & 26.4 \\
            Image-WAM~\citeyearpar{zhang2026imagewam} & 85.0 & 37.6 \\
            MV-WAM~\citeyearpar{chen2026mv} & 84.0 & 55.7 \\

            \addlinespace[2pt]
            \rowcolor{row}
            \textbf{\method{}} & \textbf{88.5} & \textbf{73.6} \\
            \bottomrule
        \end{tabularx}
    \end{minipage}
    \vspace{1pt}
\end{table}

\begin{figure}[!t]
    \centering
    \includegraphics[width=\linewidth]{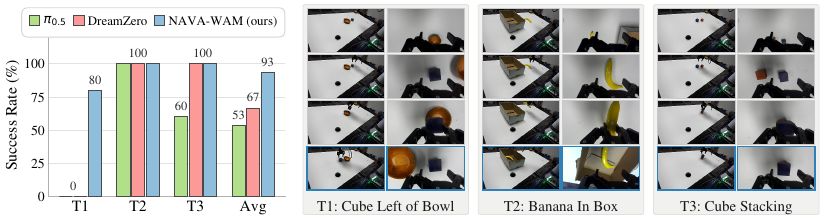}
    \captionsetup{skip=6pt}
    \caption{\textbf{Real-robot evaluation on a Franka FR3.}
    \emph{Left}: success rate over five trials per task and average success over all $15$ trials.
    \emph{Right}: \method{} rollouts for the three evaluation tasks, shown from third-person and wrist-camera views.
    \method{} achieves $93.3\%$ average success and correctly follows the spatial relation in T1, where both baselines fail across all trials.}
    \vspace{-6pt}
    \label{fig:real_robot}
\end{figure}

\textbf{Simulation evaluation.}
We evaluate on two simulation benchmark families.
\textit{LIBERO and LIBERO-Plus.}
LIBERO~\citep{liu2023libero} contains four 10-task suites, while LIBERO-Plus~\citep{fei25libero-plus} extends them with seven distribution shifts.
Following Zhang et al.~\citeyearpar{zhang2026imagewam} and Yuan et al.~\citeyearpar{yuan2026fastwam}, we train on 50 LIBERO demonstrations per task and evaluate on LIBERO (ID) and LIBERO-Plus (OOD).
\textit{RoboTwin~2.0.}
We train exclusively on 50 Clean-domain demonstrations per task from RoboTwin~2.0~\citep{chen2025robotwin} and use no Random-domain demonstrations.
Following Chen et al.~\citeyearpar{chen2026mv}, we evaluate in both Clean and Random domains, treating Random as the OOD setting.

\textbf{Real-robot evaluation.}
We additionally deploy \method{} post-trained on DROID~\citep{khazatsky2024droid} on a physical Franka FR3 arm \textit{without task-specific fine-tuning}.
We evaluate three tabletop manipulation tasks with five trials each and compare against $\pi_{0.5}$~\citep{physicalintelligence2025pi05} and the DreamZero-DROID policy~\citep{ye2026dreamzero}.
Additional details are in Appendix~\ref{app:real_robot}.

\begin{table}[!t]
\centering
\caption{\textbf{Ablation studies on RoboTwin~2.0.}
(a) Comparison of observation-only pre-training methods under varying action-label budgets.
(b) Action relevance of learned representations, measured by ridge-regression $R^2$ to ground-truth actions under different probe-data budgets.
(c) Inference compute per action chunk and Random-domain performance.
Best results are shown in \textbf{bold}.}
\label{tab:ablation}
\vspace{-3pt}
\begin{subtable}[t]{0.33\linewidth}
    \centering
    \subcaption{Pre-training \& label efficiency}
    \label{tab:data_efficiency}
    \vspace{-3pt}

    \setlength{\tabcolsep}{3pt}
    \renewcommand{\arraystretch}{1.1}
    \resizebox{\linewidth}{!}{%
    \begin{tabular}{@{}lccc@{}}
        \toprule
        & \multicolumn{3}{c}{Demos / task} \\[-2pt]
        \cmidrule(lr){2-4}
        \noalign{\vskip -2pt}
        Pre-training method & $10$ & $25$ & $50$ \\
        \midrule
        No pre-training       & 60.5 & 67.6 & 69.7 \\
        Representation-based  & 58.2 & 60.6 & 68.1 \\
        Latent-action-based   & 61.6 & 65.0 & 71.2 \\
        \rowcolor{row}
        \textbf{Ours}         & \textbf{66.5} & \textbf{72.0} & \textbf{73.6} \\
        \bottomrule
    \end{tabular}}
\end{subtable}
\hfill
\begin{subtable}[t]{0.36\linewidth}
    \centering
    \subcaption{Action relevance}
    \label{tab:linear_probe}
    \vspace{-3pt}

    \setlength{\tabcolsep}{3pt}
    \renewcommand{\arraystretch}{1.148}
    \resizebox{\linewidth}{!}{%
    \begin{tabular}{@{}lccc@{}}
        \toprule
        & \multicolumn{3}{c}{Probe data} \\[-1.6pt]
        \cmidrule(lr){2-4}
        \noalign{\vskip -1.6pt}
        Feature source & $10\%$ & $50\%$ & $100\%$ \\
        \midrule
        Representation-based
            & 0.080 & 0.157 & 0.153 \\
        $\text{Latent-action}_{~\text{(CoMo)}}$
            & 0.088 & 0.192 & 0.203 \\
        $\text{Latent-action}_{~\text{(DynaMo)}}$
            & 0.052 & 0.143 & 0.154 \\
        \rowcolor{row}
        \textbf{Ours}
            & \textbf{0.247} & \textbf{0.320} & \textbf{0.329} \\
        \bottomrule
    \end{tabular}}
\end{subtable}
\hfill
\begin{subtable}[t]{0.26\linewidth}
    \centering
    \subcaption{Inference efficiency}
    \label{tab:efficiency_optimized}
    \vspace{-3pt}

    \setlength{\tabcolsep}{3pt}
    \renewcommand{\arraystretch}{1.45}
    \resizebox{\linewidth}{!}{%
    \begin{tabular}{@{}lcc@{}}
        \toprule
        Method & TFLOPs & Random \\
        \midrule
        $\text{Fast-WAM}$
            & \textbf{3.56} & 6.3 \\
        $\text{HALO}$
            & 67.69 & 26.4 \\
        $\text{Image-WAM}$
            & 4.09 & 37.6 \\
        \rowcolor{row}
        \textbf{Ours}
            & 4.30 & \textbf{73.6} \\
        \bottomrule
    \end{tabular}}
\end{subtable}
\end{table}

\begin{figure}[!t]
    \centering
    \includegraphics[width=\linewidth]{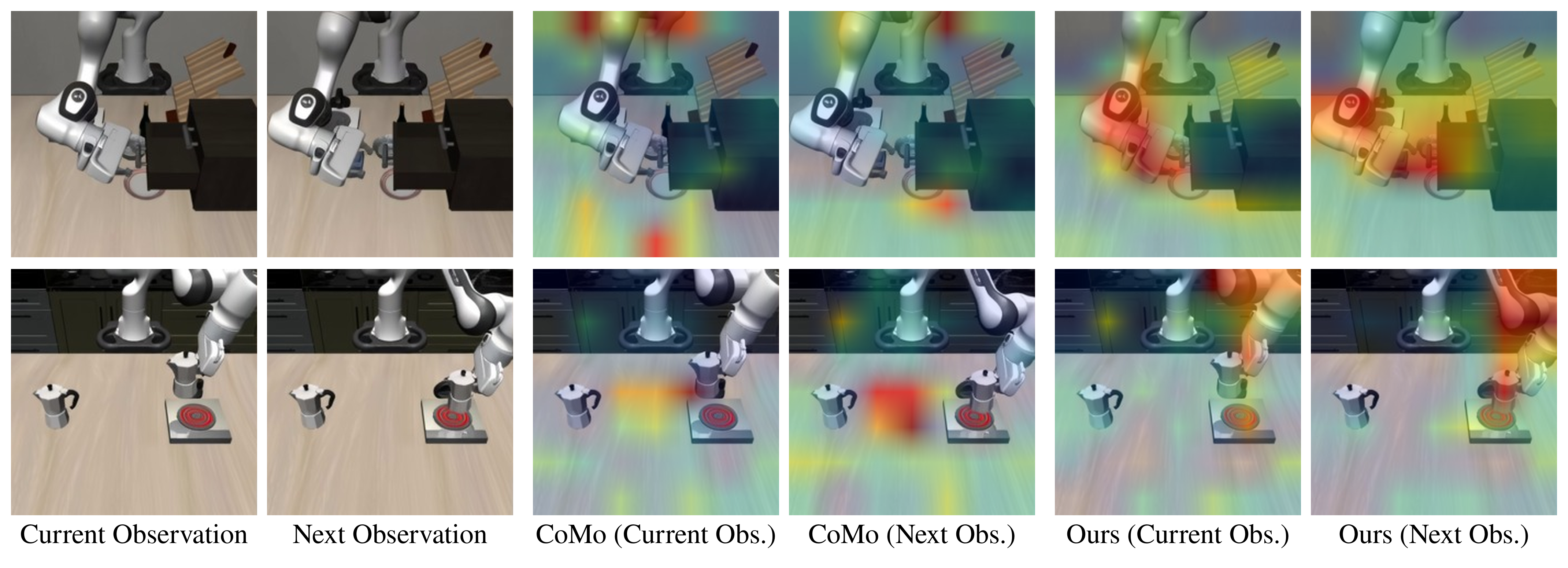}
    \captionsetup{skip=1pt}
\caption{\textbf{Attention of learned action representations.}
We visualize attention from action representations to visual tokens on held-out data, overlaid on the \textit{current} and \textit{next} input observations.
Our pretrained Action-DiT attends to interaction-relevant regions and aligns with observed motion, while the CoMo latent-action baseline shows more diffuse attention over static regions.}
\label{fig:attention_localization}
\end{figure}

\subsection{Main Results}
\label{sec:benchmark_results}

\textbf{Simulation results.}
Tab.~\ref{tab:main_results} shows that \method{} achieves $99.0\%$ success on LIBERO and $83.5\%$ on LIBERO-Plus, maintaining strong performance under substantial distribution shifts.
On RoboTwin~2.0, \method{} reaches $88.5\%$ on Clean and $73.6\%$ on Random.
Compared with the strongest baseline, \method{} improves success by $3.5$ points on Clean and $17.9$ points on Random, while 
exhibiting a much smaller Clean-to-Random performance drop.
Together, these results show that native action-prior learning preserves strong in-domain control while substantially improving generalization under visual and environmental distribution shifts.

\textbf{Real-robot results.}
As shown in Fig.~\ref{fig:real_robot}, \method{} succeeds in $14/15$ trials ($93.3\%$), compared with $10/15$ ($66.7\%$) for DreamZero and $8/15$ ($53.3\%$) for $\pi_{0.5}$.
The largest difference occurs on T1, which requires placing a cube on the \emph{left side} of a bowl: \method{} succeeds in $4/5$ trials, whereas both baselines fail in all five trials.
These results complement the simulation benchmarks and demonstrate that the learned action prior transfers effectively to physical robot deployment.

\begin{figure}[!t]
    \centering
    \includegraphics[width=\linewidth]{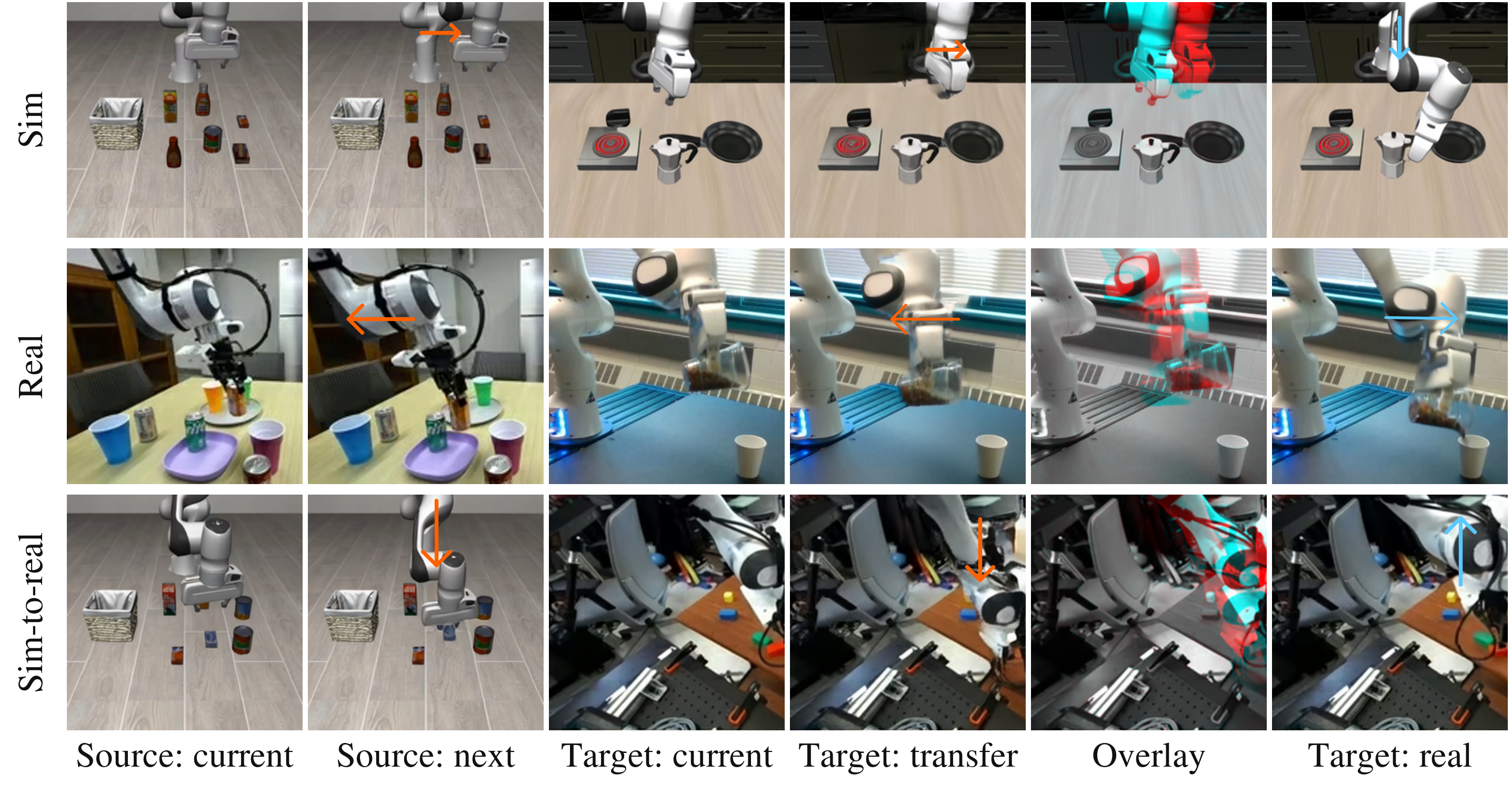}
    \captionsetup{skip=3pt}
    \caption{\textbf{Qualitative visualization of motion transfer across domains.}
    Each example transfers the transition encoded by a source pair (\textit{Source: current} $\rightarrow$ \textit{Source: next}) to a different target observation.
    The source transition specifies the underlying motion cue (orange arrows), which is applied to \textit{Target: current} to obtain \textit{Target: transfer}.
    The \textit{Overlay} visualizes the transferred state relative to the current target observation, while \textit{Target: real} shows the corresponding real transition (blue arrows).
    Across simulation, real-world, and sim-to-real examples, the transferred states follow the source motion despite substantial changes in visual appearance and scene configuration.}
    \label{fig:motion_transfer}
\label{fig:motion_transfer}
\end{figure}

\subsection{Ablation Studies}
\label{sec:ablation_studies}

We conduct ablations on RoboTwin~2.0 to analyze native action-prior learning and its impact on downstream control.
Unless otherwise specified, all variants follow the same training and evaluation protocol and differ only in the component under study.

\begin{wrapfigure}{r}{0.35\textwidth}
\centering
\vspace{-1.5em}
\includegraphics[width=\linewidth]{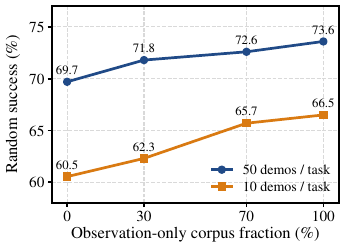}
\vspace{-2em}
\caption{\textbf{Scaling observation-only pre-training data.} Success across data scales at different action-label budgets.}
\label{fig:corpus_scale}
\vspace{-3.0em}
\end{wrapfigure}

\paragraph{Observation-only pre-training method and action-label efficiency.}
We compare different methods for leveraging observation-only videos under the same downstream protocol.
As shown in Tab.~\ref{tab:data_efficiency}, \method{} consistently outperforms representation-based and latent-action-based pre-training~\citep{yang2026learning} across all action-label budgets. 
With 25 demonstrations per task, it achieves 72.0\% success versus 60.6\% and 65.0\%, respectively. Moreover, NAVA-WAM with only 25 demonstrations surpasses all baselines trained with 50, demonstrating both more effective observation-only pre-training and improved action-label efficiency.

\paragraph{Observation-only data scale.}
We study how scaling observation-only pre-training data affects downstream generalization.
As shown in Fig.~\ref{fig:corpus_scale}, Random-domain success consistently improves with corpus size under both action-label budgets,
demonstrating that native action-prior learning benefits from more observation-only data.

\paragraph{Action relevance of learned representations.}
We probe the action relevance of representations learned from observation-only videos: Video-DiT features from vision representation learning, inferred latent actions from latent-action learning~\citep{cui2024dynamo,yang2026learning}, and Action-DiT features from native action-prior learning.
In Tab.~\ref{tab:linear_probe}, \method{} achieves the highest $R^2$ across all probe-data budgets.
Fig.~\ref{fig:attention_localization} further provides complementary qualitative evidence: our Action-DiT focuses on interaction-relevant regions such as the gripper and manipulated objects, whereas the latent-action baseline~\citep{yang2026learning} exhibits more diffuse attention over static regions.
Together, these results indicate that native action-prior learning encodes transition-relevant action information directly within the Action-DiT.

\paragraph{Qualitative motion transfer.}
Fig.~\ref{fig:motion_transfer} analyzes the action prior learned by our pretrained Action-DiT.
Given a source transition, we extract its Action-DiT activations and transfer them to a different target observation to predict the corresponding next state~\citep{garrido2026learning,todd2024function}.
Across simulation, real-world, and sim-to-real examples, the transferred states reproduce the source motion despite substantial changes in appearance and scene configuration.
These results show that native action-prior learning captures transferable action priors beyond the specific visual content of the source transition.

\paragraph{Inference efficiency.}
Tab.~\ref{tab:efficiency_optimized} compares the inference compute across methods.
\method{} requires $4.30$~TFLOPs, comparable to Image-WAM ($4.09$~TFLOPs) and Fast-WAM ($3.56$~TFLOPs), while achieving substantially higher Random-domain success at $73.6\%$.
This demonstrates that our model achieves strong out-of-domain generalization with low inference overhead.

\section{Conclusion}
We present \method{}, which introduces \emph{native action-prior learning} for world action models, directly pre-training the action policy from observation-only videos without relying on an intermediate interface.
By learning action-relevant priors from visual transitions and adapting them to robot control with action-labeled demonstrations, \method{} unifies action-free pre-training and downstream policy learning within a single model.
Experiments demonstrate strong generalization in simulation and real-robot deployment, with efficient action-only inference.
Overall, our results suggest that directly learning action priors from scalable observation-only videos offers a simple and effective path toward scaling world action models beyond costly action-labeled robot data.

\section*{Acknowledgements}
We would like to thank Tian Xie (Meta AI), Shanlin Sun (Meta AI), and Hyojun Go (ETH Zurich) for their constructive feedback on this project.
Zhaochong An and Serge Belongie are supported by funding from the Pioneer Centre for AI, DNRF grant number P1.

\bibliographystyle{assets/plainnat}
\bibliography{paper}

\clearpage
\newpage
\beginappendix

\section{Theoretical Analysis of Native Action-Prior Learning}
\label{app:action_prior_analysis}
We provide a simplified theoretical analysis of why the Action-DiT pathway captures transition information and how future-state corruption prevents a direct target copying shortcut.
Following the notation in Sec.~\ref{sec:method_pretraining}, consider a transition from the preceding visual state $z_{k-1}$ to the future state $z_k$.
During pre-training, the Action-DiT does not directly observe the clean future state $z_k$, but instead accesses its noised version
\begin{equation}
    z_{k,\tau}
    =
    (1-\tau)z_k
    +
    \tau\epsilon_k^v,
    \qquad
    \epsilon_k^v\sim\mathcal{N}(0,I),
    \label{eq:app_corrupted_future}
\end{equation}
as defined in Eq.~\ref{eq:pretraining_flow}.
We abstract the transition representation formed by the Action-DiT as
\begin{equation}
    r_k
    =
    E(z_{k-1},z_{k,\tau}),
    \label{eq:app_transition_representation}
\end{equation}
and consider predicting the clean future state from the preceding state and this representation,
\begin{equation}
    \hat z_k
    =
    D(z_{k-1},r_k).
    \label{eq:app_future_predictor}
\end{equation}
This abstraction isolates the information carried through the Action-DiT pathway while suppressing the architectural details of the joint transformer.

\paragraph{The action pathway captures transition information.}
We first characterize the predictive value of the transition representation $r_k$.
Define the Bayes-optimal prediction errors
\begin{align}
    \mathcal{L}_{z}
    &=
    \mathbb{E}\!\left[
        \left\|
        z_k-\mathbb{E}[z_k\mid z_{k-1}]
        \right\|_2^2
    \right],
    \label{eq:app_prediction_error_state}
    \\
    \mathcal{L}_{z,r}
    &=
    \mathbb{E}\!\left[
        \left\|
        z_k-\mathbb{E}[z_k\mid z_{k-1},r_k]
        \right\|_2^2
    \right].
    \label{eq:app_prediction_error_transition}
\end{align}

\begin{proposition}[Predictive value of the transition representation]
\label{prop:transition_information}
The reduction in Bayes-optimal future-state prediction error from conditioning on $r_k$ is
\begin{equation}
    \mathcal{L}_{z}
    -
    \mathcal{L}_{z,r}
    =
    \mathbb{E}\!\left[
        \left\|
        \mathbb{E}[z_k\mid z_{k-1},r_k]
        -
        \mathbb{E}[z_k\mid z_{k-1}]
        \right\|_2^2
    \right]
    \geq 0.
    \label{eq:app_prediction_error_reduction}
\end{equation}
Thus, $r_k$ is useful to the extent that it captures information about the future state beyond what is already predictable from the preceding state $z_{k-1}$.
\end{proposition}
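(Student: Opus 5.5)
The identity is a Pythagorean decomposition of conditional expectations, i.e., the law of total variance for nested $\sigma$-algebras. I will assume $\mathbb{E}\|z_k\|_2^2<\infty$, so all conditional expectations below lie in $L^2$. Write
\begin{equation*}
    m_1=\mathbb{E}[z_k\mid z_{k-1}],
    \qquad
    m_2=\mathbb{E}[z_k\mid z_{k-1},r_k].
\end{equation*}
These are the Bayes-optimal predictors appearing in $\mathcal{L}_{z}$ and $\mathcal{L}_{z,r}$ of Eqs.~\ref{eq:app_prediction_error_state} and~\ref{eq:app_prediction_error_transition}. Since $\sigma(z_{k-1})\subseteq\sigma(z_{k-1},r_k)$, the tower property gives $\mathbb{E}[m_2\mid z_{k-1}]=m_1$.

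\textbf{Step 1: decompose the residual.} I will write $z_k-m_1=(z_k-m_2)+(m_2-m_1)$ and expand the squared norm. This gives
\begin{equation*}
    \mathcal{L}_{z}
    =
    \mathcal{L}_{z,r}
    +
    \mathbb{E}\|m_2-m_1\|_2^2
    +
    2\,\mathbb{E}\big[\langle z_k-m_2,\,m_2-m_1\rangle\big].
\end{equation*}

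\textbf{Step 2: kill the cross term.} The factor $m_2-m_1$ is $\sigma(z_{k-1},r_k)$-measurable. So I will condition on $(z_{k-1},r_k)$ and pull it out of the inner conditional expectation:
\begin{equation*}
    \mathbb{E}\big[\langle z_k-m_2,\,m_2-m_1\rangle\big]
    =
    \mathbb{E}\big[\langle \mathbb{E}[z_k-m_2\mid z_{k-1},r_k],\,m_2-m_1\rangle\big]
    =0,
\end{equation*}
because $\mathbb{E}[z_k\mid z_{k-1},r_k]=m_2$. This is the only step needing care. It requires the square-integrability assumption so that the inner product is integrable (by Cauchy--Schwarz) and the "taking out what is known" rule applies. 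I would state that assumption explicitly.

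\textbf{Step 3: conclude.} Substituting the vanishing cross term into Step 1 yields Eq.~\ref{eq:app_prediction_error_reduction}. Nonnegativity is immediate, since the right-hand side is the expectation of a squared norm.

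For the interpretive sentence, I will note that the gap vanishes iff $m_2=m_1$ almost surely. That is, $r_k$ helps exactly when it carries information about $z_k$, in the mean-square sense, beyond what $z_{k-1}$ already provides. Nothing specific about the noised input $z_{k,\tau}$ or the encoder $E$ is needed here. Those only enter through $r_k$ being a measurable function of $(z_{k-1},z_{k,\tau})$.
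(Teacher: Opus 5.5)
Your proof is correct and follows the paper's argument: the same split $z_k-m_1=(z_k-m_2)+(m_2-m_1)$, with the cross term eliminated because $\mathbb{E}[z_k-m_2\mid z_{k-1},r_k]=0$. The paper leaves implicit the details you add, namely the square-integrability assumption and the measurability justification for pulling $m_2-m_1$ out; your tower-property remark is true but never used.
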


\begin{proof}
Let
\begin{equation}
    m_{k-1}
    =
    \mathbb{E}[z_k\mid z_{k-1}],
    \qquad
    m_{k-1,r}
    =
    \mathbb{E}[z_k\mid z_{k-1},r_k].
    \label{eq:app_conditional_means}
\end{equation}
We decompose
\begin{equation}
    z_k-m_{k-1}
    =
    (z_k-m_{k-1,r})
    +
    (m_{k-1,r}-m_{k-1}).
    \label{eq:app_error_decomposition}
\end{equation}
The two terms are orthogonal in expectation because
\begin{equation}
    \mathbb{E}[z_k-m_{k-1,r}\mid z_{k-1},r_k]
    =
    0.
    \label{eq:app_conditional_residual}
\end{equation}
Therefore,
\begin{equation}
    \mathbb{E}\|z_k-m_{k-1}\|_2^2
    =
    \mathbb{E}\|z_k-m_{k-1,r}\|_2^2
    +
    \mathbb{E}\|m_{k-1,r}-m_{k-1}\|_2^2,
    \label{eq:app_orthogonal_decomposition}
\end{equation}
which yields Eq.~\ref{eq:app_prediction_error_reduction}.
\end{proof}

\paragraph{Future corruption prevents a target-copying shortcut.}
If the Action-DiT directly observed the clean future state $z_k$, the prediction objective would admit the degenerate shortcut
\begin{equation}
    r_k=z_k,
    \qquad
    D(z_{k-1},r_k)=r_k,
    \label{eq:app_target_copying}
\end{equation}
which achieves zero prediction error by directly transmitting the prediction target, without requiring the action pathway to capture a meaningful transition representation.
Instead, our attention construction exposes the Action-DiT only to the Gaussian-corrupted future state $z_{k,\tau}$.

\begin{proposition}[Gaussian corruption prevents exact target copying]
\label{prop:corruption_copying}
Assume
\begin{equation}
    z_k\mid z_{k-1}
    \sim
    \mathcal{N}\!\left(
        \mu(z_{k-1}),
        \Sigma
    \right),
    \qquad
    \Sigma\succ0,
    \label{eq:app_gaussian_assumption}
\end{equation}
and let $r_k=E(z_{k-1},z_{k,\tau})$ be any representation computed from the corrupted future state in Eq.~\ref{eq:app_corrupted_future}.
Then
\begin{equation}
    \mathcal{L}_{z,r}
    \geq
    \mathbb{E}\!\left[
        \left\|
        z_k
        -
        \mathbb{E}[z_k\mid z_{k-1},z_{k,\tau}]
        \right\|_2^2
    \right].
    \label{eq:app_corruption_lower_bound}
\end{equation}
Moreover, for any $\tau>0$,
\begin{equation}
    \operatorname{Cov}
    (z_k\mid z_{k-1},z_{k,\tau})
    =
    \left(
        \Sigma^{-1}
        +
        \frac{(1-\tau)^2}{\tau^2}I
    \right)^{-1},
    \label{eq:app_conditional_covariance}
\end{equation}
and therefore
\begin{equation}
    \mathcal{L}_{z,r}
    \geq
    \operatorname{tr}
    \left[
        \left(
            \Sigma^{-1}
            +
            \frac{(1-\tau)^2}{\tau^2}I
        \right)^{-1}
    \right]
    >0.
    \label{eq:app_positive_prediction_error}
\end{equation}
Hence, at any non-zero corruption level, the clean future state cannot be transmitted perfectly through the transition representation $r_k$.
\end{proposition}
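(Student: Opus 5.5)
The plan has three steps: a data-processing argument for the lower bound \eqref{eq:app_corruption_lower_bound}, a Gaussian conjugacy computation for the covariance \eqref{eq:app_conditional_covariance}, and a trace identity for the final bound \eqref{eq:app_positive_prediction_error}.

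\textbf{Step 1: the lower bound \eqref{eq:app_corruption_lower_bound}.}
Since $r_k=E(z_{k-1},z_{k,\tau})$ is a measurable function of $(z_{k-1},z_{k,\tau})$, the $\sigma$-algebra generated by $(z_{k-1},r_k)$ is contained in the one generated by $(z_{k-1},z_{k,\tau})$. The conditional mean $\mathbb{E}[z_k\mid z_{k-1},r_k]$ is a function of $(z_{k-1},z_{k,\tau})$. The conditional mean given the finer $\sigma$-algebra is the $L^2$-optimal predictor among all such functions. Therefore
\[
\mathcal{L}_{z,r}\ge \mathbb{E}\bigl\|z_k-\mathbb{E}[z_k\mid z_{k-1},z_{k,\tau}]\bigr\|_2^2 .
\]
Equivalently, one can repeat the orthogonal decomposition used in the proof of Proposition~\ref{prop:transition_information}, with the coarser conditioning on $(z_{k-1},r_k)$ and the finer conditioning on $(z_{k-1},z_{k,\tau})$. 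The difference of the two errors is then a nonnegative squared norm. I will assume $z_k$ has finite second moments, which holds under the Gaussian assumption.

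\textbf{Step 2: the conditional covariance \eqref{eq:app_conditional_covariance}.}
Condition on $z_{k-1}$. The prior is $z_k\sim\mathcal{N}(\mu,\Sigma)$ with $\mu=\mu(z_{k-1})$. The observation is $z_{k,\tau}=(1-\tau)z_k+\tau\epsilon$, with $\epsilon\sim\mathcal{N}(0,I)$ independent of $(z_{k-1},z_k)$. I will use this independence implicitly, as it holds by construction in Eq.~\ref{eq:pretraining_flow}.

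\emph{Case $\tau=1$.} The observation carries no information, so the covariance is $\Sigma$. This is consistent with the formula, since then $(1-\tau)^2/\tau^2=0$.

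\emph{Case $\tau\in(0,1)$.} Rescale to $y=z_{k,\tau}/(1-\tau)=z_k+\frac{\tau}{1-\tau}\epsilon$. This is a linear Gaussian observation with noise covariance $\frac{\tau^2}{(1-\tau)^2}I$. Standard Gaussian conjugacy (completing the square in the joint density) gives the posterior precision
\[
\Sigma^{-1}+\frac{(1-\tau)^2}{\tau^2}I .
\]
The posterior covariance does not depend on the observed value. Hence the conditional expected squared error equals the trace of this covariance, for every $z_{k-1}$. Averaging over $z_{k-1}$ then gives the bound \eqref{eq:app_positive_prediction_error}, with no further work.

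\textbf{Step 3: strict positivity.}
The matrix $M=\Sigma^{-1}+cI$ with $c\ge0$ is symmetric positive definite, because $\Sigma\succ0$. So $M^{-1}\succ0$, and its trace, the sum of its positive eigenvalues, is strictly positive.

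The main point needing care is Step 1. It requires $E$ to be measurable, possibly randomized; if $E$ uses independent internal randomness, the same argument applies after conditioning on it, since that randomness is independent of $z_k$ given $(z_{k-1},z_{k,\tau})$. The other delicate point is the boundary case $\tau=1$ in Step 2. The Gaussian computation itself is routine.
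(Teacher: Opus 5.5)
Your proposal is correct and follows the paper's proof step for step. The lower bound holds because $r_k$ is a function of $(z_{k-1},z_{k,\tau})$, so the finer conditional mean is the better $L^2$ predictor; the covariance formula comes from standard Gaussian conditioning; and strict positivity follows from $\Sigma\succ0$. Your rescaling to $y=z_k+\frac{\tau}{1-\tau}\epsilon$, the separate check at $\tau=1$, and the explicit use of the independence of $\epsilon$ only supply details the paper leaves implicit under ``standard Gaussian conditioning.''
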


\begin{proof}
Because $r_k$ is computed from $(z_{k-1},z_{k,\tau})$, conditioning on $(z_{k-1},r_k)$ cannot provide more information about $z_k$ than conditioning directly on $(z_{k-1},z_{k,\tau})$.
Therefore,
\begin{equation}
    \mathcal{L}_{z,r}
    \geq
    \mathbb{E}\!\left[
        \left\|
        z_k
        -
        \mathbb{E}[z_k\mid z_{k-1},z_{k,\tau}]
        \right\|_2^2
    \right].
\end{equation}
Under the Gaussian assumption in Eq.~\ref{eq:app_gaussian_assumption} and the corruption process in Eq.~\ref{eq:app_corrupted_future}, standard Gaussian conditioning gives
\begin{equation}
    \operatorname{Cov}
    (z_k\mid z_{k-1},z_{k,\tau})
    =
    \left(
        \Sigma^{-1}
        +
        \frac{(1-\tau)^2}{\tau^2}I
    \right)^{-1}.
\end{equation}
The minimum mean-squared prediction error is the trace of this conditional covariance.
Since $\Sigma\succ0$ and $\tau>0$, the conditional covariance is positive definite, yielding the strictly positive lower bound in Eq.~\ref{eq:app_positive_prediction_error}.
\end{proof}

Together, Propositions~\ref{prop:transition_information} and~\ref{prop:corruption_copying} characterize the complementary roles of the transition representation and future-state corruption.
Future prediction encourages the Action-DiT pathway to capture information about the transition beyond what is available from the preceding state, while future corruption prevents the objective from being solved by the trivial shortcut of directly transmitting the clean prediction target.

\section{Additional Implementation Details}
\label{app:implementation_details}

This section provides additional implementation details for native action-prior pre-training, downstream post-training, benchmark evaluation, and action-only inference.

\subsection{Native Action-Prior Pre-training}
\label{app:pretrain}

\paragraph{Pre-training data.}
We train on $235{,}790$ observation-only manipulation episodes totaling $1{,}054.3$ hours of video from Open X-Embodiment~\citep{o2024open}, AgiBotWorld~\citep{bu2025agibot}, and EgoDex~\citep{hoque2025egodex}.
For AgiBotWorld and EgoDex, we follow the OSCAR curation procedure~\citep{wu2026oscar}, filtering short or low-quality interactions based on camera motion, manipulator activity, and hand visibility, followed by near-duplicate removal using visual and manipulator-trajectory similarity.
Videos are divided into non-overlapping $65$-frame clips, rendered on a $448\times448$ canvas, and temporally subsampled with a stride of $4$, yielding approximately $1.41$M training clips.
Each clip is encoded by the frozen spatiotemporal VAE into five latent blocks, resulting in approximately $7.0$M latent blocks for pre-training.

\paragraph{Optimization.}
Native action-prior pre-training uses only the future-video flow-matching objective in Eq.~\ref{eq:pretraining_objective}.
As described in Sec.~\ref{sec:method_pretraining}, the Video-DiT, text encoder, and VAE remain frozen, while only the Action-DiT is optimized.
We train for $14{,}000$ optimization steps on 128 GPUs using bf16 mixed precision and AdamW with $\beta_1=0.9$, $\beta_2=0.95$, and weight decay $10^{-2}$.
We use a fixed learning rate of $1\times10^{-4}$ and a global batch size of $512$.

\begin{table}[!t]
\centering
\small
\caption{\textbf{Downstream training hyperparameters on LIBERO.}
We first perform action-free target-domain cold start and subsequently jointly post-train the Video-DiT and Action-DiT using action-labeled demonstrations.}
\label{tab:downstream_hparams_libero}
\begin{tabular}{lcc}
\toprule
 & \textbf{Cold Start} & \textbf{Joint Post-training} \\
\midrule
Trainable            & Video-DiT & Video-DiT + Action-DiT \\
Action horizon       & --- & 16 \\
Learning rate        & $4\times10^{-5}$ (video) & $1\times10^{-4}$ (video), $2\times10^{-4}$ (action) \\
Epochs               & 5 & 10 \\
Scheduler            & \multicolumn{2}{c}{cosine, $5\%$ warm-up} \\
Global batch size        & \multicolumn{2}{c}{256} \\
Observation
    & \multicolumn{2}{c}{$224\times448$ composite (agent-view + wrist-view)} \\
\bottomrule
\end{tabular}

\end{table}

\begin{table}[!t]
\centering
\small
\caption{\textbf{Downstream training hyperparameters on RoboTwin~2.0.}
We first perform action-free target-domain cold start and subsequently jointly post-train the Video-DiT and Action-DiT using Clean-domain demonstrations.}
\label{tab:downstream_hparams_robotwin}
\begin{tabular}{lcc}
\toprule
 & \textbf{Cold Start} & \textbf{Joint Post-training} \\
\midrule
Trainable            & Video-DiT & Video-DiT + Action-DiT \\
Action horizon       & --- & 16 \\
Learning rate        & $2\times10^{-4}$ (video) & $1\times10^{-4}$ (video), $3\times10^{-4}$ (action) \\
Epochs               & 1 & 5 \\
Scheduler            & \multicolumn{2}{c}{cosine, $5\%$ warm-up} \\
Global batch size       & \multicolumn{2}{c}{1024} \\
Observation
    & \multicolumn{2}{c}{$384\times320$ composite (head-view + two wrist-views)} \\
\bottomrule
\end{tabular}

\end{table}

\subsection{Downstream Post-training}
\label{app:downstream_adaptation}

For downstream adaptation, we first perform an action-free cold start to adapt the Video-DiT to the target visual domain, following~\cite{chen2026mv}.
During this stage, the pretrained Action-DiT remains frozen, while the Video-DiT is optimized using future-video flow matching on target-domain videos without action labels.
This stage serves solely as target-domain visual initialization and introduces no additional action supervision.
We then unfreeze both experts and jointly post-train the Video-DiT and Action-DiT using Eq.~\ref{eq:posttraining_objective} with $\lambda_v=\lambda_a=1$.

For both LIBERO and RoboTwin~2.0, downstream training is initialized from the same checkpoint pretrained with native action-prior learning.
Benchmark-specific cold-start and joint post-training configurations are summarized in Tabs.~\ref{tab:downstream_hparams_libero} and~\ref{tab:downstream_hparams_robotwin}, respectively.

\subsection{Detailed Transition-Structured Attention Masks}
\label{app:attention_mask}

Fig.~\ref{fig:detailed_attention_mask} provides a detailed visualization of the transition-structured attention masks introduced in Sec.~\ref{sec:method_pretraining}.
For a sequence of visual transitions, we construct one segment for each transition, where segment $k$ contains the clean preceding state $z_{k-1}$, the noised future state $z_{k,\tau_v}$, and the corresponding Action-DiT token group $x_k^a$.
Different segments are fully isolated from one another, ensuring that each segment models its corresponding transition without accessing information from other transitions.

Within each segment, the clean preceding-state queries attend only to themselves.
During pre-training, the noised-future and action-token queries attend to all token groups within the segment, allowing future-video supervision to propagate through the Action-DiT.
During post-training, noised-future video queries attend only to visual tokens, while action queries retain access to both visual and action tokens, yielding the asymmetric interaction described in Sec.~\ref{sec:method_posttraining}.

\begin{figure}[!t]
    \centering
    \includegraphics[width=\linewidth]{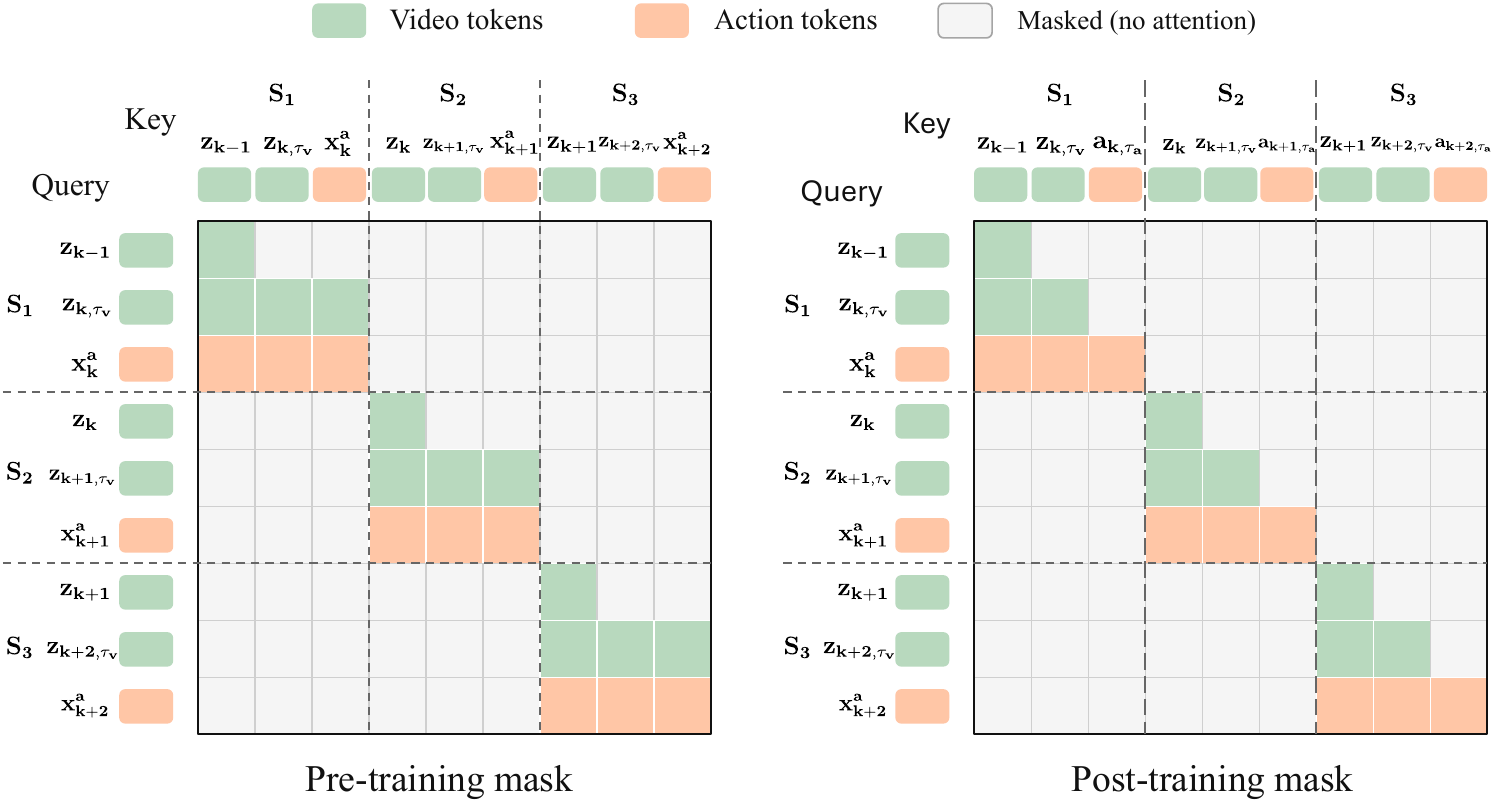}
\caption{\textbf{Transition-structured attention masks.}
The pre-training mask (\textit{left}) and post-training mask (\textit{right}) are shown for three consecutive transition segments $S_1$--$S_3$.
Each segment contains a clean preceding visual state, a noised future state, and the corresponding Action-DiT tokens, with attention isolated across segments.
Pre-training allows noised-future and action-token queries to attend to all tokens within their segment, whereas during post-training, noised-future video queries attend only to visual tokens.
}
    \label{fig:detailed_attention_mask}
\end{figure}

\subsection{Benchmark and Evaluation Details}
\label{app:evaluation_details}

\paragraph{Baselines.}
We compare \method{} with representative action policies and recent world action models reported in Tab.~\ref{tab:main_results}.
For RoboTwin~2.0, baseline results follow the Clean-to-Random evaluation reported by MV-WAM~\citep{chen2026mv}.
Unless otherwise specified, all other baseline results are taken from the corresponding papers.

\paragraph{LIBERO.}
LIBERO~\citep{liu2023libero} is built on the robosuite simulator and comprises four task suites covering different manipulation and generalization settings.
\textit{LIBERO-Spatial} varies the spatial relationships between objects, \textit{LIBERO-Object} varies the object category to be manipulated, and \textit{LIBERO-Goal} fixes the objects and their spatial arrangement while varying the task goal.
The fourth suite, \textit{LIBERO-Long}, focuses on long-horizon tasks that combine these factors.
Each suite contains ten tasks with 50 demonstrations per task.
Following the standard protocol~\citep{zhang2026imagewam,yuan2026fastwam}, we train a single multi-task policy on all 40 tasks using 50 demonstrations per task, totaling $2{,}000$ trajectories, and evaluate each task over 50 rollouts with a 7-D action space.

\paragraph{LIBERO-Plus.}
LIBERO-Plus~\citep{fei25libero-plus} reconstructs the 40 LIBERO evaluation tasks under controlled perturbations along seven factors: \emph{object layout}, \emph{camera viewpoints}, \emph{robot initial states}, \emph{language instructions}, \emph{lighting conditions}, \emph{background textures}, and \emph{sensor noise}.
The full benchmark comprises $10{,}030$ test-only task instances.
We use no LIBERO-Plus trajectories for training and directly evaluate the policy post-trained on the original LIBERO demonstrations.
Following~\cite{zhang2026imagewam}, we perform one rollout per instance and report the macro-averaged success rate across perturbation types and task suites in Sec.~\ref{app:detailed_results}.

\paragraph{RoboTwin~2.0.}
RoboTwin~2.0~\citep{chen2025robotwin} provides bimanual manipulation tasks under Clean and domain-randomized conditions.
It applies structured domain randomization along five axes:  \textit{task-irrelevant distractor objects},  \textit{background textures},  \textit{lighting conditions},  \textit{tabletop height}, and \textit{language instructions}.
We evaluate the 50-task multi-task setting and, following~\cite{chen2026mv}, train exclusively on 50 Clean-domain demonstrations per task, totaling $2{,}500$ action-labeled trajectories.
No Random-domain demonstrations are used for training, making the Random domain an OOD evaluation relative to Clean-domain training.
We evaluate 100 rollouts per task under both conditions with a 14-D action space.

\subsection{Action-Only Inference}
\label{app:action_inference}

The asymmetric transition attention introduced in Sec.~\ref{sec:method_posttraining} makes the Video-DiT independent of the evolving action sample, allowing its representations to be reused across action-denoising steps.
At each replanning step, we initialize the future-video and action streams with independent Gaussian noise and perform a single joint forward pass at $\tau_v=\tau_a=1$.
During this pass, we cache the layer-wise Video-DiT keys and values.
All subsequent denoising steps evaluate only the Action-DiT while reusing the cached visual representations.

We integrate the action flow using $S=10$ uniform Euler steps for both LIBERO and RoboTwin~2.0.
The future-video latent is neither iteratively denoised nor decoded, and the visual cache is recomputed once at each replanning step.
Algorithm~\ref{alg:action_only_inference} summarizes the inference procedure.

\begin{algorithm}[t]
\caption{Action-only inference with cached Video-DiT representations}
\label{alg:action_only_inference}
\begin{algorithmic}[1]
\REQUIRE observation latent $z_0$; instruction $l$; proprioception $q$; flow steps $S$; step size $\Delta\tau=1/S$
\ENSURE action chunk $\hat{a}$ of horizon $H$
\STATE $\epsilon^v,\epsilon^a \sim \mathcal{N}(0,I)$
\STATE $a \gets \epsilon^a$; \quad $\tau_a \gets 1$
\STATE $(u^a,\mathcal{C}_V) \gets
\operatorname{JointStep}\!\left(
[z_0,\epsilon^v],a,\tau_v=1,\tau_a=1 \mid l,q
\right)$
\STATE $a \gets a-\Delta\tau\,u^a$
\STATE $\tau_a \gets \tau_a-\Delta\tau$
\FOR{$s=2$ \TO $S$}
    \STATE $u^a \gets v^a_{\theta_A}\!\left(a,\tau_a\mid\mathcal{C}_V,l,q\right)$
    \STATE $a \gets a-\Delta\tau\,u^a$
    \STATE $\tau_a \gets \tau_a-\Delta\tau$
\ENDFOR
\STATE \textbf{return} $\hat{a}\gets a$
\end{algorithmic}
\end{algorithm}

\section{Additional Experimental Results}
\label{app:additional_results}

This section provides additional analyses of native action-prior pre-training, qualitative visualizations of learned transition priors, detailed benchmark breakdowns, and RoboTwin~2.0 policy rollouts.

\subsection{Analysis of Native Action-Prior Pre-training}
\label{app:pretraining_analysis}

\begin{figure}[!t]
    \centering

    \includegraphics[width=0.9\linewidth]{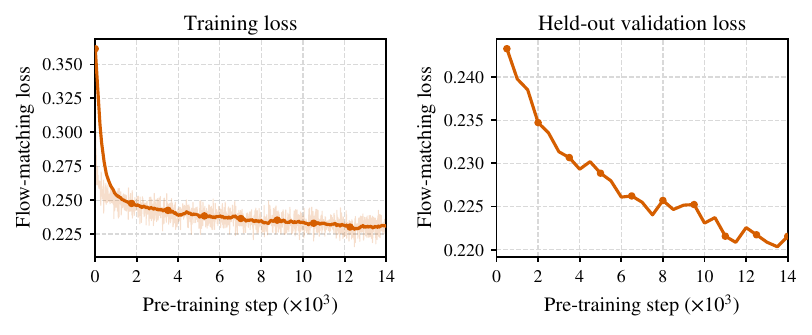}
    \captionsetup{skip=1pt}
    \caption{\textbf{Native action-prior pre-training dynamics.}
    Training and held-out future-video flow-matching losses over pre-training.
    The training curve shows the raw per-step loss (faint) and its exponential moving average (solid), while the held-out loss is evaluated periodically without smoothing.}
    \label{fig:pretrain_curves}

    \vspace{4pt}

    \includegraphics[width=\linewidth]{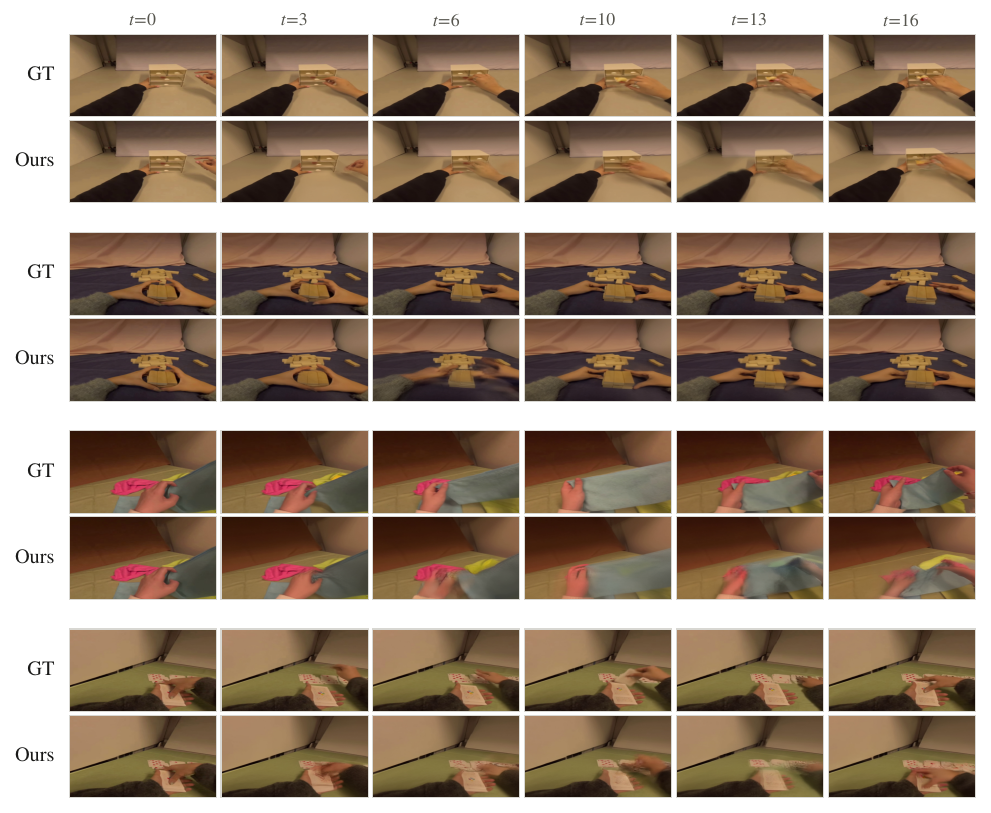}
    \caption{\textbf{Future-video prediction after native action-prior pre-training.}
    Predictions on four held-out egocentric manipulation clips.
    Each example shows the ground-truth sequence (\textit{GT}) and the corresponding prediction from \method{} (\textit{Ours}) across block stacking, two-handed assembly, cloth folding, and card manipulation.
    }
    \label{fig:pretrain_recon}

\end{figure}

\paragraph{Pre-training dynamics.}
Fig.~\ref{fig:pretrain_curves} shows the training and held-out future-video flow-matching losses throughout native action-prior pre-training.
Both losses decrease rapidly during the first few thousand optimization steps and continue to decline gradually thereafter, indicating stable optimization.

\begin{figure}[!t]
    \centering
    \includegraphics[width=\linewidth]{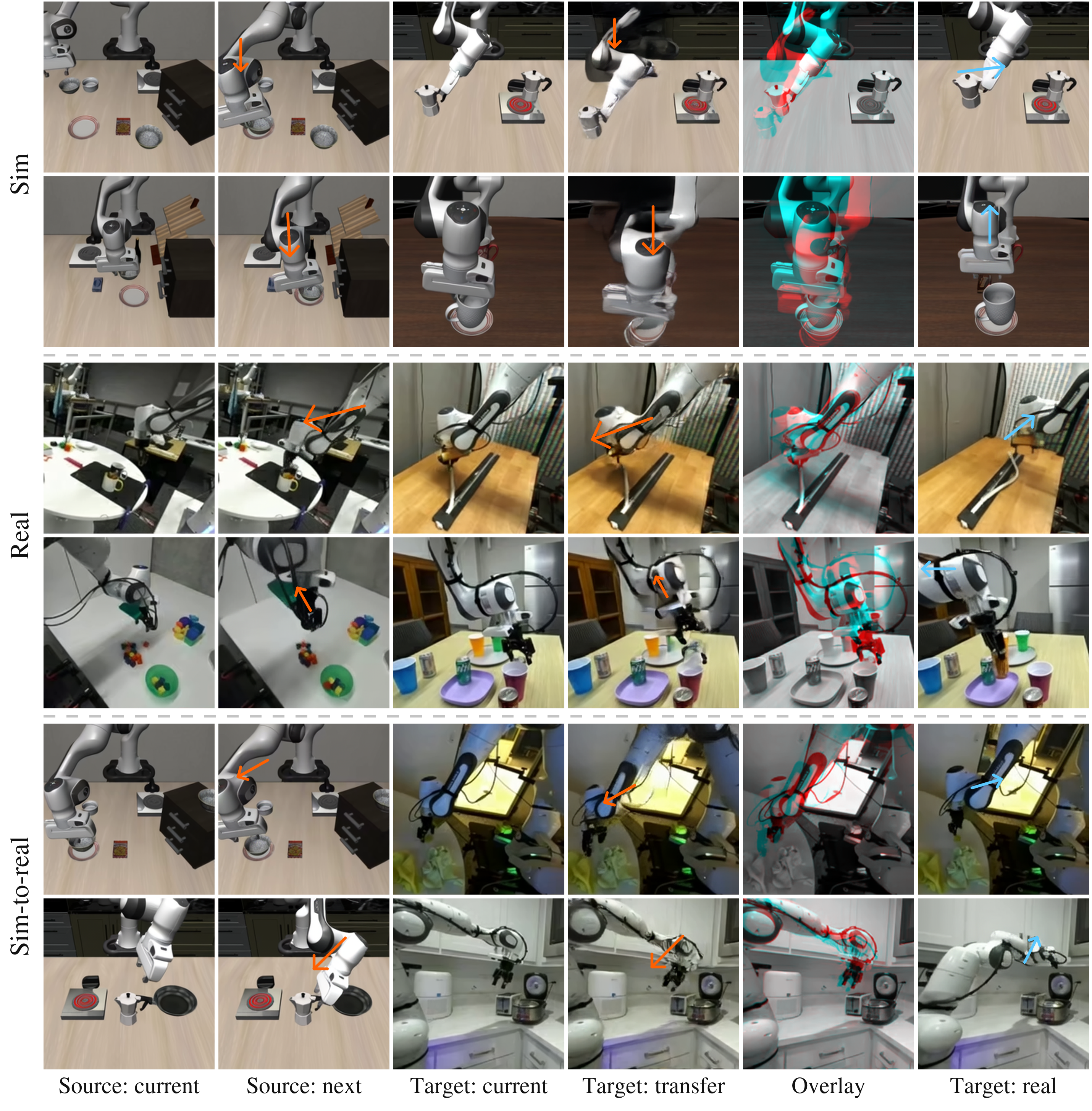}
    \caption{\textbf{Additional qualitative visualizations of motion transfer across domains.}
    Each example transfers the transition encoded by a source pair (\textit{Source: current} $\rightarrow$ \textit{Source: next}) to a different target observation.
    The source transition specifies the underlying motion cue (orange arrows), which is applied to \textit{Target: current} to obtain \textit{Target: transfer}.
    The \textit{Overlay} visualizes the transferred state relative to the current target observation, while \textit{Target: real} shows the corresponding real transition (blue arrows).
    Across simulation, real-world, and sim-to-real examples, the transferred states follow the source motion despite substantial changes in visual appearance and scene configuration.}
    \label{fig:motion_transfer_additional}
\end{figure}

\paragraph{Future-video prediction.}
Fig.~\ref{fig:pretrain_recon} visualizes future-video predictions on held-out egocentric manipulation clips after native action-prior pre-training.
Each temporally subsampled clip contains $17$ frames and is encoded into $5$ VAE latent blocks.
The first block is kept clean, while each subsequent block is independently denoised from Gaussian noise using $10$ flow-matching steps.
The predictions show that the model captures both hand motion and the evolution of manipulated objects across diverse interactions, indicating that the pretrained Action-DiT effectively supports visual-transition prediction.

\subsection{Qualitative Analysis of Learned Transition Priors}
\label{app:qualitative_transition}

Fig.~\ref{fig:motion_transfer_additional} provides additional motion-transfer examples complementing Fig.~\ref{fig:motion_transfer}.
Across source--target pairs with substantial differences in scene appearance, layout, and camera viewpoint, the transferred target states follow the motion specified by the corresponding source transitions in simulation, real-world, and sim-to-real settings.
These examples suggest that the pretrained Action-DiT captures transition-relevant action information that transfers across visual domains.

\subsection{Detailed Benchmark Results}
\label{app:detailed_results}

We report per-suite, per-perturbation, and per-task results underlying the aggregate success rates in Tab.~\ref{tab:main_results}.

\paragraph{LIBERO.}
Tab.~\ref{tab:libero_per_suite} reports success rates across the four LIBERO suites.
\method{} performs consistently across all suites and achieves $99.0\%$ success on LIBERO-Long, where several compared methods show their largest performance drop.

\begin{table}[t]
\centering
\small
\setlength{\tabcolsep}{10pt}
\caption{\textbf{Per-suite success rates (\%) on LIBERO.}
All methods are trained on 50 demonstrations per task and evaluated over 50 rollouts per task.
\textit{Avg.} denotes the macro-average across the four task suites.
Baselines with available per-suite results are reported.
The best result in each column is shown in \textbf{bold}.}
\label{tab:libero_per_suite}
\resizebox{0.8\linewidth}{!}{%
\begin{tabular}{lccccc}
\toprule
Method & Spatial & Object & Goal & Long & Avg. \\
\midrule
\rowcolor{gray!10}
\multicolumn{6}{l}{\emph{Direct action policies}} \\
$\pi_0$~\citeyearpar{black2024pi0}
    & 96.8 & 98.8 & 95.8 & 85.2 & 94.1 \\
$\pi_{0.5}$~\citeyearpar{physicalintelligence2025pi05}
    & \textbf{98.8} & 98.2 & 98.0 & 92.4 & 96.9 \\
\addlinespace[2pt]
\rowcolor{gray!10}
\multicolumn{6}{l}{\emph{World action models}} \\
\addlinespace[2pt]
Fast-WAM~\citeyearpar{yuan2026fastwam}
    & 98.2 & \textbf{100.0} & 97.0 & 95.2 & 97.6 \\
Image-WAM~\citeyearpar{zhang2026imagewam}
    & 97.2 & 99.2 & \textbf{98.8} & 98.4 & 98.4 \\
\addlinespace[2pt]
\rowcolor{row}
\textbf{\method{}}
    & \textbf{98.8} & 99.6 & 98.6 & \textbf{99.0} & \textbf{99.0} \\
\bottomrule
\end{tabular}}

\end{table}

\begin{table}[t]
\centering
\small
\caption{\textbf{Per-perturbation success rates (\%) on LIBERO-Plus.}
Each column averages over the four LIBERO task suites, and \textit{Avg.} denotes the macro-average across the seven perturbation types.
No LIBERO-Plus trajectories are used for training.
The best result in each column is shown in \textbf{bold}.}
\label{tab:libero_plus_per_dim}
\setlength{\tabcolsep}{4pt}
\resizebox{\linewidth}{!}{%
\begin{tabular}{lcccccccc}
\toprule
Method & Camera & Robot & Language & Light & Background & Noise & Layout & Avg. \\
\midrule
$\pi_0$~\citeyearpar{black2024pi0}         & 13.8 & 6.0 & 58.8 & 85.0 & 81.4 & 79.0 & 68.9 & 53.6 \\
$\pi_0$-FAST~\citeyearpar{pertsch2025fast} & 65.1 & 21.6 & 61.0 & 73.2 & 73.2 & 74.4 & 68.8 & 61.6 \\
Fast-WAM~\citeyearpar{yuan2026fastwam}     & 16.4 & 44.5 & 68.9 & 78.2 & 53.7 & 37.7 & 60.7 & 51.5 \\
Image-WAM~\citeyearpar{zhang2026imagewam}  & \textbf{80.8} & 50.3 & 91.4 & \textbf{98.1} & \textbf{85.5} & \textbf{93.8} & 80.5 & 83.1 \\
\addlinespace[2pt]
\rowcolor{row}
\textbf{\method{}} & 72.8 & \textbf{65.6} & \textbf{93.8} & 97.9 & 82.8 & 84.8 & \textbf{87.1} & \textbf{83.5} \\
\bottomrule
\end{tabular}}

\end{table}

\paragraph{LIBERO-Plus.}
Tab.~\ref{tab:libero_plus_per_dim} reports success rates across the seven LIBERO-Plus perturbation types, with baseline results taken from~\cite{zhang2026imagewam}.
\method{} performs best on robot-initialization, language, and object-layout shifts, reaching $65.6\%$, $93.8\%$, and $87.1\%$, respectively.
Image-WAM performs better on camera, lighting, background, and sensor-noise perturbations.
Overall, \method{} achieves the highest macro-average success rate, demonstrating strong out-of-domain generalization across diverse perturbations.

\paragraph{RoboTwin~2.0.}
Tab.~\ref{tab:robotwin_per_task} reports success rates across all 50 RoboTwin~2.0 tasks.
\method{} achieves the best or tied-best Random-domain performance on $44$ of the $50$ tasks and the best or tied-best Clean-domain performance on $32$ tasks.
Its aggregate Clean-to-Random performance drop is also substantially smaller than those of the compared baselines, indicating stronger robustness to the Random-domain shift.

\begin{table}[!t]
\centering
\caption{\textbf{Per-task success rates (\%) on RoboTwin~2.0.}
All methods are trained exclusively on Clean-domain demonstrations and evaluated in both the Clean and Random (\textit{Rand.}) domains.
Baseline per-task results are taken from MV-WAM~\citep{chen2026mv}.
The final row reports the average across all 50 tasks.
The best result for each task and domain is shown in \textbf{bold}.}
\label{tab:robotwin_per_task}
\setlength{\tabcolsep}{2pt}
\resizebox{\linewidth}{!}{%
\begin{tabular}{@{}l|cc|cc|cc|cc|cc|cc|cc|cc|cc@{}}
\toprule
Task & \multicolumn{2}{c|}{DP} & \multicolumn{2}{c|}{RDT} & \multicolumn{2}{c|}{$\pi_0$} & \multicolumn{2}{c|}{UP-VLA} & \multicolumn{2}{c|}{BagelVLA} & \multicolumn{2}{c|}{HALO} & \multicolumn{2}{c|}{Fast-WAM} & \multicolumn{2}{c|}{MV-WAM} & \multicolumn{2}{c}{\textbf{Ours}} \\
\cmidrule(lr){2-3}\cmidrule(lr){4-5}\cmidrule(lr){6-7}\cmidrule(lr){8-9}\cmidrule(lr){10-11}\cmidrule(lr){12-13}\cmidrule(lr){14-15}\cmidrule(lr){16-17}\cmidrule(lr){18-19}
 & Clean & Rand. & Clean & Rand. & Clean & Rand. & Clean & Rand. & Clean & Rand. & Clean & Rand. & Clean & Rand. & Clean & Rand. & Clean & Rand. \\
\midrule
Adjust Bottle & 97.0 & 0.0 & 81.0 & 75.0 & 90.0 & 56.0 & \textbf{100.0} & 17.0 & \textbf{100.0} & 14.0 & 97.0 & 9.0 & 95.0 & 0.0 & 83.0 & 65.0 & \textbf{100.0} & \textbf{97.5} \\
Beat Block Hammer & 42.0 & 0.0 & 77.0 & 37.0 & 43.0 & 21.0 & 66.0 & 16.0 & 87.0 & 16.0 & 96.0 & 11.0 & 74.0 & 2.0 & 75.0 & \textbf{53.0} & \textbf{100.0} & 42.5 \\
Blocks Ranking RGB & 0.0 & 0.0 & 3.0 & 0.0 & 19.0 & 5.0 & 38.0 & 0.0 & 84.0 & 4.0 & 94.0 & 7.0 & 0.0 & 1.0 & 99.0 & 88.0 & \textbf{100.0} & \textbf{92.5} \\
Blocks Ranking Size & 1.0 & 0.0 & 0.0 & 0.0 & 7.0 & 1.0 & 21.0 & 0.0 & 45.0 & 2.0 & 58.0 & 2.0 & 38.0 & 0.0 & 63.0 & 43.0 & \textbf{70.0} & \textbf{60.0} \\
Click Alarmclock & 61.0 & 5.0 & 61.0 & 12.0 & 63.0 & 11.0 & 69.0 & 41.0 & 85.0 & 20.0 & 83.0 & 14.0 & \textbf{100.0} & 38.0 & 90.0 & 24.0 & \textbf{100.0} & \textbf{100.0} \\
Click Bell & 54.0 & 0.0 & 80.0 & 9.0 & 44.0 & 3.0 & 54.0 & 72.0 & \textbf{100.0} & 35.0 & \textbf{100.0} & 10.0 & \textbf{100.0} & 22.0 & 95.0 & 43.0 & \textbf{100.0} & \textbf{90.0} \\
Dump Bin Bigbin & 49.0 & 0.0 & 64.0 & 32.0 & 83.0 & 24.0 & 81.0 & 35.0 & 91.0 & 51.0 & 93.0 & 28.0 & 96.0 & 3.0 & 92.0 & 61.0 & \textbf{97.5} & \textbf{77.5} \\
Grab Roller & 98.0 & 0.0 & 74.0 & 43.0 & 96.0 & 80.0 & 99.0 & 28.0 & 99.0 & 41.0 & 95.0 & 57.0 & 95.0 & 4.0 & \textbf{100.0} & 94.0 & \textbf{100.0} & \textbf{100.0} \\
Handover Block & 10.0 & 0.0 & 45.0 & 14.0 & 45.0 & 8.0 & 4.0 & 0.0 & 38.0 & 0.0 & 81.0 & 36.0 & 5.0 & 0.0 & \textbf{83.0} & 19.0 & 82.5 & \textbf{70.0} \\
Handover Mic & 53.0 & 0.0 & 90.0 & 31.0 & 98.0 & 13.0 & 45.0 & 0.0 & 75.0 & 8.0 & 96.0 & 61.0 & \textbf{100.0} & 0.0 & 99.0 & 86.0 & 95.0 & \textbf{100.0} \\
Hanging Mug & 8.0 & 0.0 & 23.0 & 16.0 & 11.0 & 3.0 & 4.0 & 0.0 & 12.0 & 1.0 & \textbf{48.0} & 5.0 & 32.0 & 0.0 & 46.0 & 16.0 & 45.0 & \textbf{27.5} \\
Lift Pot & 39.0 & 0.0 & 72.0 & 9.0 & 84.0 & 36.0 & 20.0 & 0.0 & 87.0 & 32.0 & 95.0 & 34.0 & 89.0 & 0.0 & \textbf{100.0} & 46.0 & 97.5 & \textbf{60.0} \\
Move Can Pot & 39.0 & 0.0 & 25.0 & 12.0 & 58.0 & 21.0 & 48.0 & 0.0 & 78.0 & 0.0 & 94.0 & 15.0 & 85.0 & 7.0 & 87.0 & 24.0 & \textbf{95.0} & \textbf{25.0} \\
Move Pillbottle Pad & 1.0 & 0.0 & 8.0 & 0.0 & 21.0 & 1.0 & 51.0 & 7.0 & 92.0 & 1.0 & 76.0 & 26.0 & 92.0 & 1.0 & 90.0 & 34.0 & \textbf{100.0} & \textbf{85.0} \\
Move Playingcard Away & 47.0 & 0.0 & 43.0 & 11.0 & 53.0 & 22.0 & 79.0 & 13.0 & 92.0 & 30.0 & 89.0 & 53.0 & 99.0 & 2.0 & 98.0 & 89.0 & \textbf{100.0} & \textbf{92.5} \\
Move Stapler Pad & 1.0 & 0.0 & 2.0 & 0.0 & 0.0 & 2.0 & 8.0 & 0.0 & 27.0 & 0.0 & 45.0 & 19.0 & 35.0 & 0.0 & 24.0 & 11.0 & \textbf{65.0} & \textbf{47.5} \\
Open Laptop & 49.0 & 0.0 & 59.0 & 32.0 & 85.0 & 46.0 & 86.0 & 21.0 & \textbf{96.0} & 37.0 & 89.0 & 37.0 & 90.0 & 8.0 & 94.0 & 59.0 & 92.5 & \textbf{67.5} \\
Open Microwave & 5.0 & 0.0 & 37.0 & 20.0 & 80.0 & \textbf{50.0} & 2.0 & 7.0 & 0.0 & 0.0 & \textbf{86.0} & 24.0 & 44.0 & 1.0 & 50.0 & 2.0 & 35.0 & 5.0 \\
Pick Diverse Bottles & 6.0 & 0.0 & 2.0 & 0.0 & 27.0 & 6.0 & 52.0 & 18.0 & 83.0 & 34.0 & 76.0 & 17.0 & 69.0 & 4.0 & 81.0 & 50.0 & \textbf{90.0} & \textbf{57.5} \\
Pick Dual Bottles & 24.0 & 0.0 & 42.0 & 13.0 & 57.0 & 12.0 & 82.0 & 31.0 & 93.0 & 56.0 & 82.0 & 30.0 & 76.0 & 9.0 & 93.0 & 56.0 & \textbf{100.0} & \textbf{75.0} \\
Place A2B Left & 2.0 & 0.0 & 3.0 & 1.0 & 31.0 & 1.0 & 74.0 & 4.0 & 79.0 & 12.0 & 68.0 & 8.0 & 76.0 & 1.0 & 90.0 & 69.0 & \textbf{95.0} & \textbf{77.5} \\
Place A2B Right & 13.0 & 0.0 & 1.0 & 1.0 & 27.0 & 6.0 & 56.0 & 1.0 & 81.0 & 11.0 & 52.0 & 9.0 & 84.0 & 3.0 & 87.0 & 68.0 & \textbf{92.5} & \textbf{90.0} \\
Place Bread Basket & 14.0 & 0.0 & 10.0 & 2.0 & 17.0 & 4.0 & 63.0 & 20.0 & 90.0 & 29.0 & 90.0 & 26.0 & \textbf{96.0} & 4.0 & 92.0 & 71.0 & 92.5 & \textbf{90.0} \\
Place Bread Skillet & 11.0 & 0.0 & 5.0 & 1.0 & 23.0 & 1.0 & 71.0 & 16.0 & 91.0 & 26.0 & 85.0 & 23.0 & 85.0 & 3.0 & 90.0 & 60.0 & \textbf{92.5} & \textbf{87.5} \\
Place Burger Fries & 72.0 & 0.0 & 50.0 & 27.0 & 80.0 & 4.0 & 97.0 & 26.0 & 99.0 & 11.0 & 99.0 & 37.0 & 96.0 & 8.0 & 90.0 & 79.0 & \textbf{100.0} & \textbf{100.0} \\
Place Can Basket & 18.0 & 0.0 & 19.0 & 6.0 & 41.0 & 5.0 & 20.0 & 0.0 & 63.0 & 0.0 & 68.0 & 34.0 & 58.0 & 0.0 & \textbf{80.0} & 44.0 & \textbf{80.0} & \textbf{60.0} \\
Place Cans Plasticbox & 40.0 & 0.0 & 6.0 & 5.0 & 34.0 & 2.0 & 66.0 & 24.0 & 94.0 & 5.0 & 98.0 & 47.0 & 93.0 & 2.0 & \textbf{99.0} & 48.0 & 97.5 & \textbf{90.0} \\
Place Container Plate & 41.0 & 0.0 & 78.0 & 17.0 & 88.0 & 45.0 & 86.0 & 48.0 & \textbf{100.0} & 58.0 & 96.0 & 22.0 & 98.0 & 17.0 & 96.0 & \textbf{86.0} & 97.5 & 77.5 \\
Place Dual Shoes & 8.0 & 0.0 & 4.0 & 4.0 & 15.0 & 0.0 & 45.0 & 0.0 & 57.0 & 0.0 & 15.0 & 3.0 & 24.0 & 0.0 & 42.0 & 24.0 & \textbf{80.0} & \textbf{52.5} \\
Place Empty Cup & 37.0 & 0.0 & 56.0 & 7.0 & 37.0 & 11.0 & 74.0 & 27.0 & 97.0 & 34.0 & 95.0 & 28.0 & 98.0 & 10.0 & 92.0 & \textbf{86.0} & \textbf{100.0} & 85.0 \\
Place Fan & 3.0 & 0.0 & 12.0 & 2.0 & 20.0 & 10.0 & 31.0 & 1.0 & 62.0 & 5.0 & 62.0 & 9.0 & 75.0 & 1.0 & 76.0 & 42.0 & \textbf{85.0} & \textbf{80.0} \\
Place Mouse Pad & 0.0 & 0.0 & 1.0 & 0.0 & 7.0 & 1.0 & 27.0 & 0.0 & 46.0 & 14.0 & 51.0 & 12.0 & 71.0 & 0.0 & 73.0 & 34.0 & \textbf{85.0} & \textbf{57.5} \\
Place Object Basket & 15.0 & 0.0 & 33.0 & 17.0 & 16.0 & 2.0 & 56.0 & 1.0 & 66.0 & 3.0 & 89.0 & 25.0 & 50.0 & 2.0 & \textbf{93.0} & \textbf{78.0} & 80.0 & 65.0 \\
Place Object Scale & 1.0 & 0.0 & 1.0 & 0.0 & 10.0 & 0.0 & 36.0 & 4.0 & 71.0 & 0.0 & 55.0 & 5.0 & 80.0 & 0.0 & \textbf{91.0} & 53.0 & 82.5 & \textbf{77.5} \\
Place Object Stand & 22.0 & 0.0 & 15.0 & 5.0 & 36.0 & 11.0 & 76.0 & 24.0 & 87.0 & 21.0 & 84.0 & 33.0 & 95.0 & 12.0 & 93.0 & 72.0 & \textbf{97.5} & \textbf{75.0} \\
Place Phone Stand & 13.0 & 0.0 & 15.0 & 6.0 & 35.0 & 7.0 & 32.0 & 0.0 & 61.0 & 2.0 & \textbf{91.0} & 10.0 & 85.0 & 0.0 & 79.0 & 52.0 & 85.0 & \textbf{72.5} \\
Place Shoe & 23.0 & 0.0 & 35.0 & 7.0 & 28.0 & 6.0 & 76.0 & 12.0 & 90.0 & 29.0 & 70.0 & 18.0 & 83.0 & 7.0 & 91.0 & 79.0 & \textbf{95.0} & \textbf{97.5} \\
Press Stapler & 6.0 & 0.0 & 41.0 & 24.0 & 62.0 & 29.0 & 79.0 & 56.0 & 94.0 & 58.0 & 92.0 & 64.0 & 58.0 & 19.0 & \textbf{99.0} & 45.0 & 90.0 & \textbf{67.5} \\
Put Bottles Dustbin & 22.0 & 0.0 & 21.0 & 4.0 & 54.0 & 13.0 & 7.0 & 0.0 & 42.0 & 10.0 & 80.0 & 13.0 & 83.0 & 2.0 & \textbf{93.0} & 43.0 & 87.5 & \textbf{75.0} \\
Put Object Cabinet & 42.0 & 0.0 & 33.0 & 18.0 & \textbf{68.0} & 18.0 & 7.0 & 0.0 & 52.0 & 0.0 & 59.0 & 8.0 & 41.0 & 0.0 & 45.0 & 25.0 & 50.0 & \textbf{37.5} \\
Rotate QRcode & 13.0 & 0.0 & 50.0 & 5.0 & 68.0 & 15.0 & 56.0 & 2.0 & 81.0 & 21.0 & 69.0 & 11.0 & 76.0 & 0.0 & 79.0 & 40.0 & \textbf{90.0} & \textbf{70.0} \\
Scan Object & 9.0 & 0.0 & 4.0 & 1.0 & 18.0 & 1.0 & 47.0 & 23.0 & 77.0 & 32.0 & 73.0 & 24.0 & 77.0 & 4.0 & 81.0 & 59.0 & \textbf{85.0} & \textbf{67.5} \\
Shake Bottle Horizontally & 59.0 & 18.0 & 84.0 & 51.0 & 99.0 & 51.0 & \textbf{100.0} & 68.0 & \textbf{100.0} & 73.0 & \textbf{100.0} & 66.0 & \textbf{100.0} & 41.0 & \textbf{100.0} & 97.0 & \textbf{100.0} & \textbf{100.0} \\
Shake Bottle & 65.0 & 8.0 & 74.0 & 45.0 & 97.0 & 60.0 & 98.0 & 54.0 & \textbf{100.0} & 74.0 & 98.0 & 73.0 & \textbf{100.0} & 49.0 & \textbf{100.0} & 98.0 & \textbf{100.0} & \textbf{100.0} \\
Stack Blocks Three & 0.0 & 0.0 & 2.0 & 0.0 & 17.0 & 0.0 & 8.0 & 0.0 & 45.0 & 5.0 & 96.0 & 37.0 & 0.0 & 0.0 & 97.0 & 51.0 & \textbf{97.5} & \textbf{87.5} \\
Stack Blocks Two & 7.0 & 0.0 & 21.0 & 2.0 & 42.0 & 1.0 & 61.0 & 0.0 & 95.0 & 6.0 & \textbf{100.0} & 60.0 & 2.0 & 0.0 & \textbf{100.0} & 75.0 & \textbf{100.0} & \textbf{95.0} \\
Stack Bowls Three & 63.0 & 0.0 & 51.0 & 17.0 & 66.0 & 24.0 & 42.0 & 1.0 & 63.0 & 13.0 & \textbf{92.0} & 25.0 & 77.0 & 1.0 & 86.0 & \textbf{68.0} & 80.0 & 62.5 \\
Stack Bowls Two & 61.0 & 0.0 & 76.0 & 30.0 & 91.0 & 41.0 & 69.0 & 12.0 & 90.0 & 52.0 & \textbf{98.0} & 49.0 & 96.0 & 11.0 & 90.0 & 81.0 & 97.5 & \textbf{90.0} \\
Stamp Seal & 2.0 & 0.0 & 1.0 & 0.0 & 3.0 & 4.0 & 34.0 & 2.0 & 77.0 & 8.0 & 60.0 & 21.0 & 68.0 & 0.0 & 73.0 & 30.0 & \textbf{90.0} & \textbf{52.5} \\
Turn Switch & 36.0 & 1.0 & 35.0 & 15.0 & 27.0 & 23.0 & 43.0 & 26.0 & 49.0 & 30.0 & \textbf{65.0} & 27.0 & 56.0 & 17.0 & 62.0 & 65.0 & 55.0 & \textbf{67.5} \\
\midrule
Average & 28.0 & 0.6 & 34.5 & 13.7 & 46.4 & 16.3 & 52.9 & 15.2 & 75.3 & 20.5 & 80.5 & 26.4 & 71.9 & 6.3 & 84.0 & 55.7 & \textbf{88.5} & \textbf{73.6} \\
\bottomrule
\end{tabular}}

\end{table}

\begin{figure}[p]
    \centering
    \includegraphics[width=0.91\linewidth]{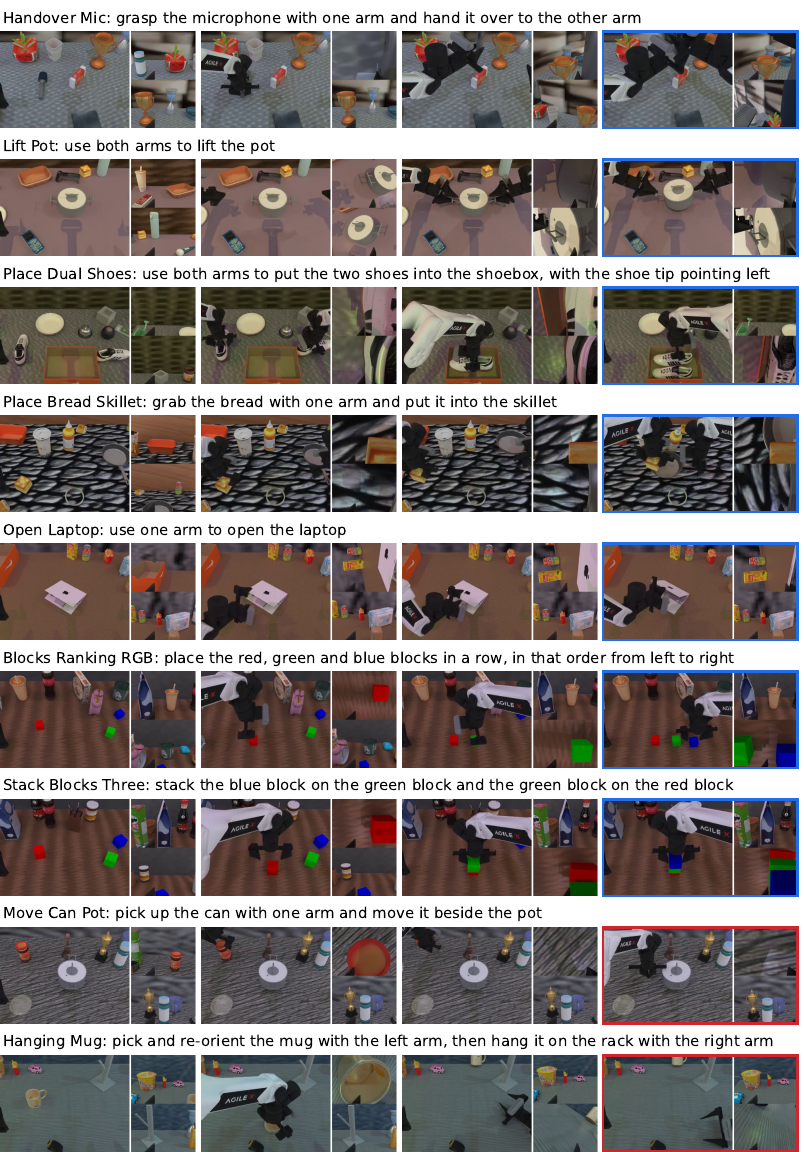}
    \vspace{-5pt}
    \caption{\textbf{Qualitative \method{} rollouts on RoboTwin~2.0.}
    We show seven successful rollouts (top) spanning diverse manipulation skills and two representative failures (bottom).
    Each row contains four temporal keyframes, with the head-camera view on the left and the two wrist-camera views stacked on the right.
    The row header gives the RoboTwin task description, and the final keyframe is outlined in blue for success and red for failure.
    All examples are from the Random (OOD) domain.}
    \label{fig:robotwin_qual}
\end{figure}

\subsection{Qualitative RoboTwin~2.0 Rollouts}
\label{app:robotwin_qualitative}

Fig.~\ref{fig:robotwin_qual} presents representative \method{} rollouts from the RoboTwin~2.0 Random domain.
All examples are drawn from the same evaluation setting as Tab.~\ref{tab:robotwin_per_task}, where the policy is trained exclusively on Clean-domain demonstrations and evaluated under domain randomization unseen during training.
Each Random-domain rollout therefore constitutes an out-of-domain evaluation under variations in distractor objects, background textures, lighting, tabletop height, and language instructions.
Videos of these rollouts are provided in our \projectpage.

\paragraph{Successful rollouts.}
The successful rollouts span diverse manipulation skills in RoboTwin~2.0.
\emph{Handover Mic} and \emph{Lift Pot} require bimanual coordination, with the former transferring an object between the two arms and the latter synchronously lifting an object with both grippers.
\emph{Place Dual Shoes} combines bimanual manipulation with a terminal pose constraint, requiring both shoes to be placed inside the shoebox with the prescribed orientation.
\emph{Place Bread Skillet} requires precise single-arm placement into a small container, while \emph{Open Laptop} requires manipulating an articulated object along its hinge.
\emph{Blocks Ranking RGB} and \emph{Stack Blocks Three} are long-horizon multi-object tasks that require grounding color references and executing multiple consecutive pick-and-place operations without disturbing previously placed blocks.
Across these tasks, \method{} completes the instructed behaviors despite substantial variation in distractors, appearance, and scene configuration.

\paragraph{Failure cases.}
The two representative failures occur during fine-grained grasping and contact-rich manipulation.
In \emph{Move Can Pot}, the reaching motion knocks the can over instead of establishing a stable grasp; the arm then executes the transport motion with an empty gripper without re-attempting the grasp, and the episode eventually reaches its step limit.
In \emph{Hanging Mug}, the policy completes the initial pick sub-goal, lifting the mug with the left arm and placing it beside the rack, but the right arm fails to continue the task by lifting the mug onto the rack, and the episode eventually reaches its step limit.

Together, these cases suggest that failures can arise during fine-grained, contact-rich interactions even when the target object and overall task progression are correctly identified.
In both examples, the policy fails to recover from a local execution error, causing it to propagate into task failure.

\subsection{Real-Robot Evaluation Details}
\label{app:real_robot}

To evaluate real-world transfer, we further post-train \method{} on DROID~\citep{khazatsky2024droid} and directly deploy the resulting policy on a physical Franka FR3 robot following~\cite{ye2026dreamzero}.
DROID is a large-scale in-the-wild manipulation dataset containing approximately $76$K teleoperated trajectories across diverse tasks and environments.
\textit{No additional fine-tuning is performed on the evaluation tasks or physical test environment.}

\begin{table}[t]
\centering
\caption{\textbf{Real-robot success rates and inference cost.}
We report successful trials out of five runs per task on a Franka FR3, with \textit{Avg.} denoting the success rate across all $15$ trials.
T1: \emph{move the cube to the left side of the bowl}; T2: \emph{put the banana in the box}; T3: \emph{put the blue cube on the red cube}.
\textit{ms\,/\,call} denotes the server-side wall-clock latency per policy call on a single H100.
The best result in each column is shown in \textbf{bold}.}
\label{tab:real_robot}
\footnotesize
\setlength{\tabcolsep}{4pt}
\renewcommand{\arraystretch}{1.08}

\begin{tabularx}{\linewidth}{
    @{}
    l
    c
    >{\centering\arraybackslash}X
    >{\centering\arraybackslash}X
    >{\centering\arraybackslash}X
    c
    @{}
}
\toprule
 & & \multicolumn{3}{c}{Successful trials $\uparrow$} & \\
\cmidrule(lr){3-5}
Model & ms\,/\,call $\downarrow$ & T1 & T2 & T3 & Avg (\%) $\uparrow$ \\
\midrule
$\pi_{0.5}$~\citep{physicalintelligence2025pi05}
& \textbf{195.7} & 0/5 & \textbf{5/5} & 3/5 & 53.3 \\

DreamZero~\citep{ye2026dreamzero}
& 3523.9 & 0/5 & \textbf{5/5} & \textbf{5/5} & 66.7 \\

\addlinespace[2pt]
\rowcolor{row}
\textbf{\method{}}
& 379.2 & \textbf{4/5} & \textbf{5/5} & \textbf{5/5} & \textbf{93.3} \\
\bottomrule
\end{tabularx}

\end{table}

\paragraph{DROID post-training.}
Starting from the native action-prior pretrained checkpoint, we post-train \method{} on DROID following the training setup of DreamZero~\citep{ye2026dreamzero}.
We use a global batch size of $256$, a cosine learning-rate schedule with $5\%$ warm-up, and a learning rate of $5\times10^{-5}$ for both experts.
The policy predicts action chunks with a horizon of $32$, and visual inputs are processed at a resolution of $352\times640$.

\paragraph{Deployment setup.}
The policy receives one wrist-camera view and two third-person RGB views, composited into the same $352\times640$ visual input used during DROID post-training.
\method{} predicts $32$-step absolute joint-position action chunks, of which the first $24$ steps are executed open-loop before replanning.
The robot controller runs at $15$\,Hz, and each trial is capped at $1{,}000$ control steps.

\paragraph{Tasks and baselines.}
We evaluate three tabletop manipulation tasks with five trials per task: T1, \emph{move the cube to the left side of the bowl}; T2, \emph{put the banana in the box}; and T3, \emph{put the blue cube on the red cube}.
We compare against $\pi_{0.5}$~\citep{physicalintelligence2025pi05} and the released DreamZero-DROID checkpoint~\citep{ye2026dreamzero}.
All policies use the same observation pipeline, communication protocol, and $15$\,Hz joint-position controller.
The scene is manually reset to a random nominal layout before each trial.
Each policy retains its own action parameterization and deployment horizon.
$\pi_{0.5}$ predicts $24$ relative-action steps and executes the first $8$ before replanning, DreamZero predicts and executes $24$ absolute-action steps, and \method{} predicts $32$ absolute-action steps and executes the first $24$.

\paragraph{Detailed task analysis.}
As summarized in Tab.~\ref{tab:real_robot}, \method{} succeeds in $14$ of $15$ trials ($93.3\%$), compared with $10/15$ ($66.7\%$) for DreamZero and $8/15$ ($53.3\%$) for $\pi_{0.5}$.
All three policies solve T2 in all five trials, showing that each policy can complete the basic manipulation under our deployment setup.

The largest difference occurs on T1, which requires grounding the instructed spatial relation: \method{} succeeds in $4/5$ trials, whereas both baselines fail in all five trials.
Inspection of the baseline rollouts suggests that these failures are not primarily caused by object localization or grasping.
In all ten baseline trials, the cube is successfully localized and grasped, but neither baseline ultimately places it to the left of the bowl.
DreamZero instead moves the cube toward the bowl and terminates with it either inside the bowl or still held above it, while $\pi_{0.5}$ places the cube to the right of the bowl or inside it.
On T3, both \method{} and DreamZero succeed in all five stacking trials, while $\pi_{0.5}$ succeeds in three.
In its two failed trials, $\pi_{0.5}$ places the blue cube beside the red cube rather than on top of it.
These results demonstrate that \method{} transfers effectively to physical robot deployment \textit{without task-specific fine-tuning}.

\paragraph{Inference cost.}
We additionally report the server-side latency of one policy call on a single H100.
\method{} requires $379.2$\,ms per call, compared with $195.7$\,ms for $\pi_{0.5}$ and $3523.9$\,ms for DreamZero.
Because the policies execute different numbers of actions before replanning, per-call latency alone does not fully reflect the effective deployment cost.
Normalizing by the number of executed actions per replanning step yields $15.8$\,ms per executed action for \method{}, $24.5$\,ms for $\pi_{0.5}$, and $146.8$\,ms for DreamZero.

\section{Limitations}
\label{app:limitations}

First, \method{} can still struggle with challenging tasks that require precise grasping, complex interactions, and other fine-grained contact-rich manipulation.
Fine-grained control and complex-instruction understanding therefore remain limitations that warrant further investigation.
Second, the policy currently lacks an explicit mechanism for recovering from local execution errors.
Developing mechanisms that enable the policy to identify such failures and recover during execution is an important direction for future work.

\end{document}